\definecolor{impr}{RGB}{34, 139, 34}
\newcommand{\cald}{\mathcal{D}}
\newcommand{\calx}{\mathcal{X}}
\newcommand{\call}{\mathcal{L}}
\theoremstyle{plain}
\newtheorem{theorem}{Theorem}[section]
\theoremstyle{definition}
\theoremstyle{remark}
\newtheorem{remark}[theorem]{Remark}
\newcommand{\modelname}{Transformer Copilot\xspace}
    \def\@fnsymbol#1{\ensuremath{\ifcase#1\or \dagger\or \ddagger\or
       \mathsection\or \mathparagraph\or \|\or **\or \dagger\dagger
       \or \ddagger\ddagger \else\@ctrerr\fi}}
\title{Transformer Copilot: Learning from The Mistake Log in LLM Fine-tuning
}
\author{
Jiaru Zou$^{1}$, Yikun Ban$^{1}$\thanks{Corresponding authors}\space \space, Zihao Li$^{1}$, Yunzhe Qi$^{1}$, \textbf{Ruizhong Qiu$^{1}$,} \textbf{Ling Yang$^{2}$\footnotemark[1]\space \space,} \textbf{Jingrui He$^{1}$\footnotemark[1]} \\
$^{1}$University of Illinois Urbana-Champaign \quad $^{2}$Princeton University \\
\texttt{\{\href{mailto:jiaruz2@illinois.edu}{jiaruz2}, 
\href{mailto:yikunb2@illinois.edu}{yikunb2}, 
\href{mailto:jingrui@illinois.edu}{jingrui}\}@illinois.edu, 
\href{mailto:ly1988@princeton.edu}{ly1988@princeton.edu}} \\[5pt]
{\fontsize{10pt}{12pt}\selectfont
\raisebox{-0.16em}{\includegraphics[height=1em]{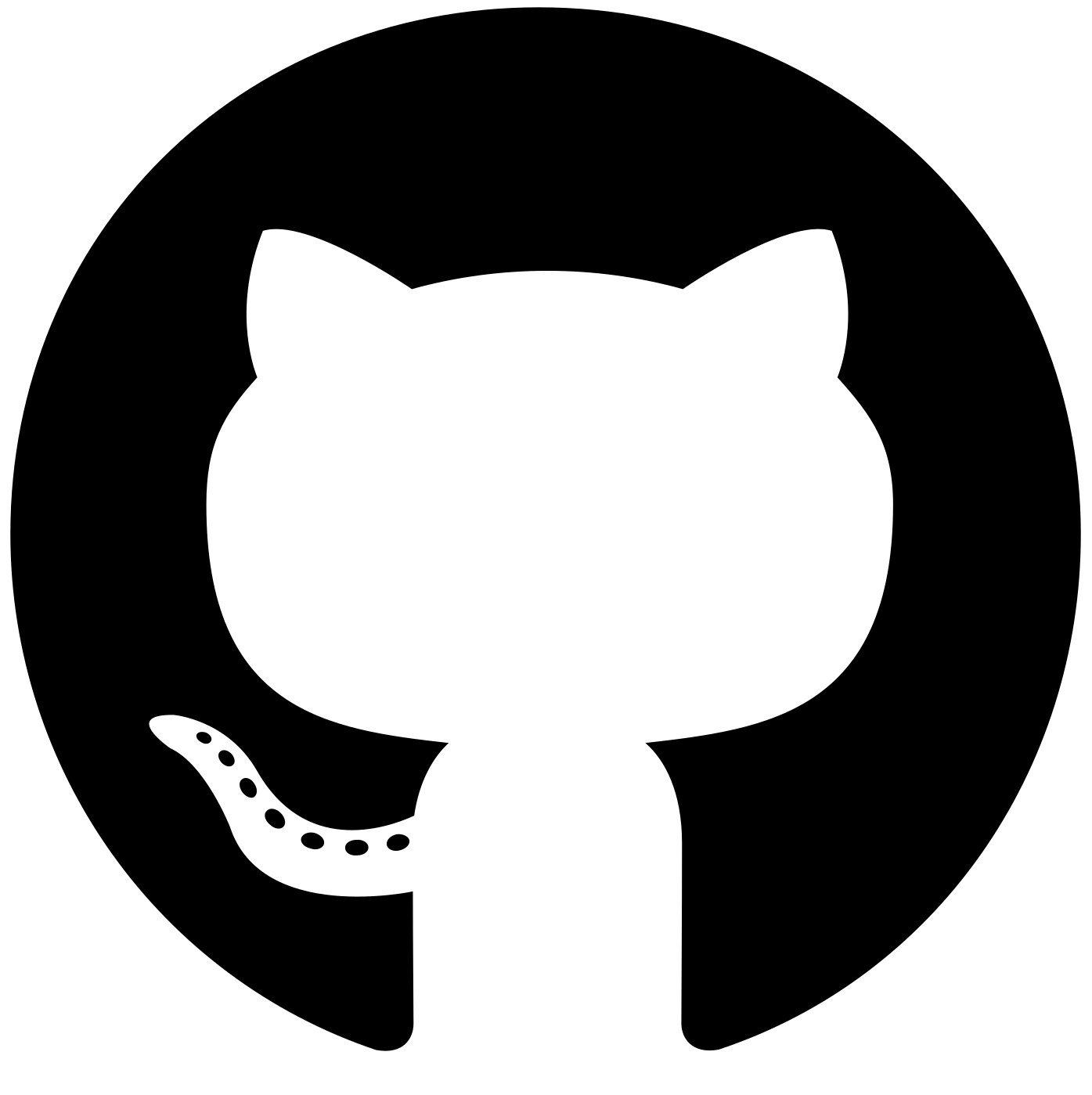}}\,
\textbf{Code:}~\href{https://github.com/jiaruzouu/TransformerCopilot}{\textcolor{cyan}{\texttt{https://github.com/jiaruzouu/TransformerCopilot}}}
}
}
\begin{document}

\AtBeginDocument{
  \hypersetup{pdfborder={0 0 0}}
}

\maketitle

\vspace{-10pt}
\begin{abstract}
\vspace{-7pt}
Large language models are typically adapted to downstream tasks through supervised fine-tuning on domain-specific data. While standard fine-tuning focuses on minimizing generation loss to optimize model parameters, we take a deeper step by retaining and leveraging the model’s own learning signals, analogous to how human learners reflect on past mistakes to improve future performance. We first introduce the concept of \textit{Mistake Log} to systematically track the model’s learning behavior and recurring errors throughout fine-tuning. Treating the original transformer-based model as the Pilot, we correspondingly design a Copilot model to refine the Pilot’s inference performance via logits rectification. 
We name the overall Pilot-Copilot framework the \textit{Transformer Copilot}, which introduces (i) a novel Copilot model design, (ii) a joint training paradigm where the Copilot continuously learns from the evolving Mistake Log alongside the Pilot, and (iii) a fused inference paradigm where the Copilot rectifies the Pilot’s logits for enhanced generation.
We provide both theoretical and empirical analyses on our new learning framework. Experiments on 12 benchmarks spanning commonsense, arithmetic, and recommendation tasks demonstrate that Transformer Copilot consistently improves performance by up to 34.5\%, while introducing marginal computational overhead to Pilot models and exhibiting strong scalability and transferability.

\end{abstract}

\addtocontents{toc}{\protect\setcounter{tocdepth}{-1}}
\section{Introduction}

Transformers, the foundation of modern large language models (LLMs), leverage attention and feedforward layers to compute logits for sequence generation \cite{vaswani2017attention}. 
Pre-trained on general-domain corpora, these models capture rich statistical patterns and exhibit strong generation capabilities \cite{brown2020language, wei2021finetuned, mo2024large}. 
On top of that, supervised fine-tuning (SFT) serves as a critical technique for adapting pre-trained LLMs to specific domains \cite{kenton2019bert, raffel2020exploring, wei2021finetuned}.
While SFT enables significant flexibility and task-specific optimization, the performance of fine-tuned LLMs during inference often remains suboptimal, exhibiting misalignment between training and testing stages \cite{li2024hindsight,wang2024loss}.
This gap arises from the model’s inability to fully capture task-specific nuances or from overfitting to patterns within the training data, ultimately degrading its final performance \cite{raffel2020exploring,min2022rethinking, zhao2021calibrate, mccoy2019right}.
Without data-side interventions \cite{madaan2023self,miao2023selfcheck,gulcehre2023reinforced} or receiving external feedback \cite{ouyang2022training, shinn2023reflexion,yao2023react}, this paper aims to address a fundamental question: \textit{Can we enhance the inference performance by retaining and leveraging the model’s own learning signals in standard fine-tuning?}

To address this question, our core idea draws inspiration from a common strategy by human learners: maintaining a log to record mistakes during practice, reflecting, and using insights to improve performance in formal tests. 
Rather than merely memorizing these mistakes, proficient learners engage in reflective thinking—analyzing their internal cognitive states at the moment the errors occurred, questioning how and why the mistakes were made.
The reflective practice enables learners to identify recurring error patterns and approach uncertain problems with greater caution and awareness.

Motivated by this human reflection thinking mechanism \cite{hommel2023reflection}, we propose the concept of \textbf{Mistake Log} tailored for LLMs' fine-tuning. 
At training stages, standard SFT primarily focuses on optimizing model parameters by minimizing the expected loss over fine-tuning datasets \cite{wei2021finetuned, zhou2023lima}. We take a deeper step to systematically record the rich intermediate information within the model, including input data (Question), internal hidden state representations (Rationale), and token-level quantified errors (Mistakes), as Mistake Log components to track model’s mistakes through its training trajectory.

Next, to fully exploit the Mistake Log, we propose the \textbf{\modelname} (abbreviated as T-Copilot), a novel Pilot-Copilot framework that enables error-aware refinement by learning from model-internal signals \cite{boyd1983reflective,brockbank2017reflective, huang2022large}. In addition to the original model (referred to as the Pilot), we introduce a Copilot model that captures and leverages the Pilot’s Mistake Log throughout its learning trajectory, rectifying the Pilot’s logits to improve final token-by-token generation. Overall, our learning framework offers advantages from three key perspectives: (i) \textbf{New Model Architecture Design:} We design the Copilot as a transduction neural network that learns recurring error patterns from the Mistake Log. A residual flow connection is then established between the Copilot and Pilot models, allowing the Copilot to assist the Pilot via token-level error correction during generation. (ii) \textbf{New Training Paradigm:} We redesign the SFT procedure by jointly training the Pilot and Copilot models in each round, enabling the Copilot to continuously learn from the evolving Mistake Log and adapt alongside the Pilot model. (iii) \textbf{New Inference Paradigm:} During next-token generation, we fuse the output logits from the Pilot and Copilot models into a unified probability distribution, enabling collaborative auto-regressive generation. 
In this way, T-Copilot fundamentally integrates an internalized reflection mechanism into standard SFT, enabling an adaptive and error-aware generation.

To demonstrate the efficacy of the T-Copilot, we provide detailed analyses from both theoretical and empirical perspectives. We incorporate T-Copilot into representative encoder-decoder and decoder-only Pilot models, and conduct extensive experiments across 12 tasks on commonsense, arithmetic, and real-world recommendation. T-Copilot improves the performance of Pilot by up to 34.5\% while surpassing strong baselines with significantly fewer parameters. For example, integrating T-Copilot with Qwen2.5-7B outperforms Qwen2.5-14B using 4B fewer parameters. We further comprehensively study the efficiency, transferability, and scalability of T-Copilot, showing that T-Copilot brings marginal computational overhead to Pilot, scales well across different model types and sizes, and effectively transfers to new Pilot models for inference without additional training costs.

\vspace{-5pt}
\section{Definition of Mistake Log} 
\vspace{-5pt}
\subsection{Preliminary and Notations}
\label{sec:preliminary}
\vspace{-5pt}

Let \( f^P(\cdot; \theta^P) \) denote the function computed by a standard Transformer model~\cite{vaswani2017attention}, parameterized by \( \theta^P \). In our context, we refer to \( f^P \) as the \emph{Pilot} model. Suppose there are \( T \) fine-tuning rounds. For each round \( t \in [T] \), given an input sequence \( X_t = (x_{t,1}, \dots, x_{t,n}) \) where \( n \) is the maximum sequence length, the input is sampled from a data distribution \( \mathcal{D} = \mathcal{D}_{\mathcal{X}, \mathcal{Y}} \) over input-output pairs. The Pilot model then generates an output sequence \( \hat{Y}_t = (\hat{y}_{t,1}, \dots, \hat{y}_{t,n}) \) in an auto-regressive manner to approximate the target sequence \( Y_t = (y_{t,1}, \dots, y_{t,n}) \), where \( (X_t, Y_t) \sim \mathcal{D} \).

During \( t \)-th fine-tuning round, let \( \widetilde{X}_t \) denote the input representation of \( X_t \), defined as either the encoder output in an encoder-decoder Transformer or the output of the token and positional embedding layer in a decoder-only Transformer. In the forward pass through the residual stream of the model, let \( L^P \) be the total number of decoder layers in the Pilot model. For each layer $l \in [L^P]$, we define \( h_{t,i,l}(\widetilde{X}_t; \theta^P_{t-1}) \) as the (decoder) hidden representations of the \( i \)-th token. After the final decoder layer, the Pilot model outputs logits over the vocabulary \( V \), conditioned on the input \( X_t \) and shifted target sequence \( y_{t,<i} \). The resulting output probabilities for the \( i \)-th token are given by:
\begin{equation}
\label{eqa:1}
\hat{p}_{t,i} = \text{softmax}\left( f^P(X_t, y_{t,<i}; \theta^P_{t-1}) \right).
\end{equation}
We denote $p_{t,i}$ the ground-truth distribution over $V$ for the $i$-th token, which places full probability mass on the correct token $y_{t,i}$.
The objective of training \( f^P \) is to minimize the cross-entropy loss between the predicted and ground-truth tokens, formulated as:
\begin{equation}
\label{eqa:2}
\mathcal{L}^P_{t} = - \sum_{i=1}^n \log \hat{p}_{t,i}(y_{t,i} \mid X_t, y_{t,<i}).
\end{equation}

\subsection{The Mistake Log}
\label{sec:mistake_log}
Next, we define the \textit{Mistake Log} in fine-tuning scenarios. As shown in Figure \ref{fig:mistake_log}, the Mistake Log concludes three key components: the input representations (Questions), internal hidden states representations (Rationales), and the token-level error made by the model (Mistakes). 

In each round $t\in [T]$, draw the sequence pair $(X_t, Y_t) \sim \cald$. As defined in Section \ref{sec:preliminary}, we set $\widetilde{X}_t$ as the input representation component, as it provides contextual grounding for the Pilot model's specific input sequence. Inspired by prior works \cite{geva2020transformer, dai2021knowledge, burns2022discovering, li2024inference}, the intermediate states' hidden representations produced by Transformer blocks also encapsulate rich contextual and semantic information, reflecting the model’s internal rationales. Therefore, we define $ h_{t}(X_t; \theta^P_{t-1})$ as the collection of these internal hidden representations for each token in round $t$: 
\begin{equation}
h_{t}(\widetilde{X}_t; \theta^P_{t-1}) = \Big\{ h_{t,i}(\widetilde{X}_t; \theta^P_{t-1})\Big\}_{i=1}^n, \quad \text{with } h_{t,i}( \widetilde{X}_t; \theta^P_{t-1}) = \left\{ h_{t,i, l}(\widetilde{X}_t; \theta^P_{t-1} ) \right\}_{l=1}^{L^P},
\end{equation}
where $h_{t,i}(\widetilde{X}_t; \theta^P_{t-1})$ captures the $i$-th token level internal states representation at the point when the $i$-th token error occurs. 
Then, to quantify the token-level error of the Pilot model, we compute the discrepancy between the predicted distribution \( \hat{p}_{t,i} \) and the ground-truth distribution \( p_{t,i} \) for each token, with the error defined as:
\begin{equation}    
\ell_t(p_t, \hat{p}_t) = \left\{ \ell_t(p_{t,i}, \hat{p}_{t,i}) \right\}_{i=1}^n, \quad \text{with } \ell_t(p_{t,i}, \hat{p}_{t,i}) = p_{t,i} - \hat{p}_{t,i}.
\end{equation}

\begin{wrapfigure}{r}{0.4\textwidth}
  \centering
  \vspace{-10pt}
  \includegraphics[width=0.4\textwidth]{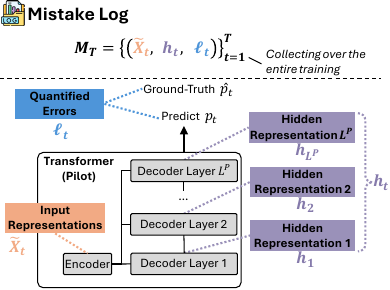}
  \caption{Illustration of the \textbf{Mistake Log}. We use the encoder-decoder architecture as an example here.}
  \label{fig:mistake_log}
  \vspace{-20pt}
\end{wrapfigure}
Consistent with standard LLM fine-tuning procedures, where the loss $\call_t^P$ is used to compute gradients and update the Pilot model's parameters across $T$ rounds, we simultaneously collect key intermediate signals described above into the Mistake Log throughout this process. Formally, we define the Mistake Log as:
\begin{equation}
M_T = \bigg\{ \left(\widetilde{X}_t, \ h_t(\widetilde{X}_t; \theta^P_{t-1}), \ \ell_t(p_t, \hat{p}_t) \right) \bigg\}_{t=1}^T.
\end{equation}
The Mistake Log systematically records contextual inputs, internal representations, and token-level prediction errors of the Pilot model throughout its entire fine-tuning trajectory. We next investigate how to leverage the Mistake Log during fine-tuning to enhance the Pilot model's final inference performance.

\vspace{-5pt}
\paragraph{Motivation for Transformer Copilot.}  
Recall that the goal of SFT is to optimize \( \theta^P \) by minimizing the expected loss \( \mathbb{E}_{(X_t, Y_t) \sim \mathcal{D}} \left[ \mathcal{L}_t^P \right] \). While this process adjusts model parameters using gradient descent, it treats each error as a transient signal, consumed and discarded immediately after the parameter update. As a result, the final model parameters \( \theta_T^P \) might not retain an explicit memory of where, how, or why errors occurred during the training trajectory. This oversight leaves valuable training-time information, which we captured in the Mistake Log, untapped at inference time.
To address this, we propose a new \textit{Copilot} model to learn from the Mistake Log. Rather than altering the Pilot’s optimization path, the Copilot operates as an auxiliary module that internalizes the distribution of past mistakes and corrects the Pilot’s output at inference time. This design enables the Copilot to assist the Pilot model by reflecting on prior missteps and adaptively revising the predictions.

\begin{figure*}[!th]
  \centering
  \includegraphics[width= \textwidth]{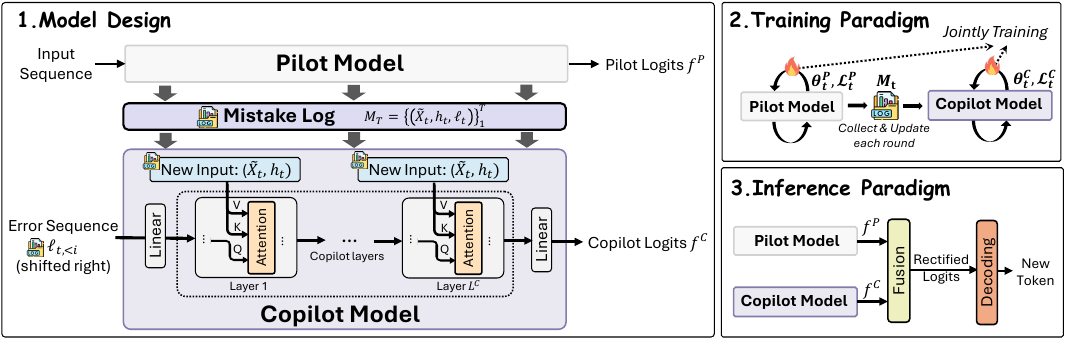}
  \caption{\modelname Framework. The overall framework comprises three key components: \textbf{(1) Copilot Model Design}, \textbf{(2) Training Paradigm}, and \textbf{(3) Inference Paradigm}.}
  \label{fig:overall_framework}
  \vspace{-10pt}
\end{figure*}

\vspace{-5pt}
\section{Transformer Copilot}
\label{sec:method}
\vspace{-5pt}

We introduce our proposed framework, \modelname, which is designed for both encoder-decoder and decoder-only Transformer architectures. In the following sections, we will elaborate on the Copilot model design, the training paradigm, and the inference paradigm, respectively.

\vspace{-5pt}
\subsection{The Copilot Model Design} 
\label{section:model_design}
\vspace{-5pt}

The Copilot model is initialized from the decoder module of the corresponding Pilot model, but with several new architectural modifications. Consistent with the Pilot model $f^P$, we denote the Copilot model as $f^C$, parameterized by $\theta^C$. The Copilot model is also auto-regressive, generating outputs over the vocabulary $V$. However, the objective of the Copilot model is to learn from the Mistake Log $M_T$ and output rectified logits that correct the predictions made by the Pilot model. Below, we specify the Copilot model design for the encoder-decoder and decoder-only Pilot model separately.

\textbf{Encoder-Decoder Copilot.} As shown in Figure \ref{fig:overall_framework}.1, the Copilot model receives its inputs from the Mistake Log, $M_T = \{ (\widetilde{X}_t, \ h_t(\widetilde{X}_t; \theta^P_{t-1}), \ \ell_t(p_t, \hat{p}_t) ) \}_{t=1}^T.$ Specifically, the Copilot is conditioned on the sequence of token-level errors made by the Pilot model, as recorded in $M_T$, i.e. $\ell_{t,<i} = (p_{t,1} - \hat{p}_{t,1}, \dots, p_{t,i-1} - \hat{p}_{t,i-1})$. These discrepancy sequences are provided as labels during training from $M_T$ and are auto-regressively generated during inference. As positional information is inherently preserved through the Pilot’s output, we apply a single linear layer to project the token-level errors from vocabulary space into the Copilot’s hidden dimension. Next, to incorporate additional information from the Pilot's input and internal hidden representations ($\widetilde{X}_t$ and $h_t$ from $M_T$), we propose a modified \textit{cross-attention mechanism} in each layer of the Copilot, defined as:
\begin{equation}
\label{eqa:attention}
\begin{aligned}
\text{New } Q &= H^C_{l-1} \cdot W^{Q}, \text{ for } l = 1, ..., L^C,  \\
\text{New } K &= \text{Concat}\big(\widetilde{X}_t,\ \text{Pool}_{L^P}\big(h_t(\widetilde{X}_t; \theta^P_{t-1})\big) \big) \cdot W^{K}, \\
\text{New } V &= \text{Concat}\big(\widetilde{X}_t,\ \text{Pool}_{L^P}\big(h_t(\widetilde{X}_t; \theta^P_{t-1})\big) \big)  \cdot W^{V},
\end{aligned}
\end{equation}
where $\text{Pool}_{L^P}(\cdot)$ denotes the mean pooling across $L^P$ layers of the Pilot and Concat($\cdot$) indicates concatenation along the sequence dimension to ensure input dimensional compatibility and computational efficiency; $H^C_{l-1}$ is the Copilot model's hidden state from the previous layer (or input projection layer at $l = 1$); and $W^Q, W^K, W^V$ are learnable attention weights. We then apply the standard scaled dot-product attention using the new $Q$, $K$, and $V$. This modified attention allows the Copilot to jointly attend to both the external input context and the internal processing dynamics of the Pilot. Note that all components retrieved from the Mistake Log can be directly accessed during the forward pass of the Pilot model, without incurring additional computational overhead. After the final layer $L^C$, we add a linear projection layer in the Copilot model to map the residual hidden representation into the vocabulary space, producing rectified logits as the output.

\textbf{Decoder-only Copilot. } 
We slightly adapt the Copilot model to accommodate the corresponding decoder-only Transformer \cite{touvron2023llama,achiam2023gpt}, while keeping the majority of the model input and design above unchanged. Specifically, we modify the \textit{self-attention mechanism} to incorporate the information from the Mistake Log: 
In the odd-numbered layers of $L^C$, we retain the standard self-attention to allow the Copilot model to capture intra-sequence dependencies; In the even-numbered layers, we replace self-attention with the modified cross-attention mechanism defined in Eq.~\ref{eqa:attention}, enabling the Copilot to attend to the Pilot’s input and internal state representations stored in $M_T$.
This alternating structure is consistent with the encoder-decoder Copilot to capture its own error-correction dynamics and attend to informative signals from the Pilot’s behavior. We also explore several alternative designs and empirically validate the effectiveness of our proposed design against these variants in Appendix~\ref{decoder_copilot_design}.

\textbf{Learning Objective.} Give the sequence pair \( (X_t, Y_t) \), at $t$-th round, the objective of training the Copilot model \( f^C \) at $i$-th token is defined as:
\begin{equation}
\label{eqa:7}
\mathcal{L}_{t}^C = \sqrt{\sum_{i=1}^n \| f^C_{t,i} -  \ell_t(p_{t,i}, \hat{p}_{t,i}) \|^2},\quad \text{with }f^C_{t,i} = f^C(\widetilde{X}_t, h_{t,<i}, \ell_{t,<i} ; \theta_{t-1}^C),
\end{equation}
where \( f^C_{t,i} \) is the Copilot model's prediction, $\ell_t(p_{t,i}, \hat{p}_{t,i}) = p_{t,i} - \hat{p}_{t,i}$ is the corresponding label for the Copilot model, and $h_{t,<i}$ is the collection of Pilot's hidden states for the preceding tokens. We adopt the RMSE loss to prevent the distribution error from being further diminished by the square operation, avoiding the over-smoothing effect that squaring may introduce in the gradient signal during backpropagation.
Next, we show how to jointly train the Pilot model \( f^P \) and the Copilot model \( f^C \) during fine-tuning, and collaborate on the generation during inference.
\begin{figure}[!t]
\begin{algorithm}[H]
\small
\caption{Transformer Copilot (Training Paradigm)}
\label{alg:training}
\SetAlgoLined
\KwIn{ Pilot model $f^P(\cdot;\theta^P)$, Copilot model $f^C(\cdot;\theta^C)$; Learning rates $\eta_P$,  $\eta_C$; $T$, $n$} 
 Initialize $\theta^P_0, \theta^C_0, M_0 \gets \emptyset$    \\
\For{$t = 1, 2, \dots, T$ }{
Draw $(X_t, Y_t) \sim  \cald$ \\
\colorbox{gray!20}{$\triangledown$ Pilot - token-level forward pass} \\
\For{i = $1, \dots, n$}{
Compute $\hat{p}_{t,i}$ via Eq.\ref{eqa:1}
}
\colorbox{gray!20}{$\triangledown$ Collect Mistake Log ($\S$\ref{sec:mistake_log})} \\
$M_t \gets  M_{t-1} \cup (\widetilde{X}_t,\ h_t(\widetilde{X}_t;\ \theta_{t-1}^P),\ \ell_t(p_t,\ \hat{p}_t))$ \\
Compute $\mathcal{L}_t^P$ via Eq.\ref{eqa:2};  \\
Update $\theta^P_t \gets  \theta^P_{t-1} - \eta_P \nabla_{\theta^P_{t-1}} \mathcal{L}_t^P$ \\

\texttt{/* For brevity, we reuse notation $t$ */} \\
Draw  $(\widetilde{X}_t,\ h_t(\widetilde{X}_t;\ \theta_{t-1}^P),\ \ell_t(p_t,\ \hat{p}_t)) \sim M_t$ \\
\colorbox{gray!20}{$\triangledown$ Copilot - learn from the Mistake Log ($\S$\ref{section:model_design})} \\
\For{i = $1, \dots, n$}{
\ $f^C_{t,i} \gets f^C(\widetilde{X}_t, h_{t,<i}, \ell_{t,<i} ; \theta_{t-1}^C)$
}
Compute $\mathcal{L}_t^C$ via Eq.\ref{eqa:7};  \\
Update $\theta^C_t \gets  \theta^C_{t-1} - \eta_C \nabla_{\theta^C_{t-1}} \mathcal{L}_t^C$ \\
}
\KwRet{$\theta^P_T, \theta^C_T$}\\
\end{algorithm}
\vspace{-5pt}
\end{figure}
\begin{tikzpicture}[remember picture, overlay]
\node[anchor=north east, xshift=-10.5em, yshift=-22.22em] at (current page.north east) {
\begin{minipage}{0.42\textwidth}
\begin{algorithm}[H]
\small
\caption{Inference Paradigm} %
\label{algo:inference}
\SetAlgoLined
\KwIn{$\theta^P_T$, $\theta^C_T$; Tuning parameter $\lambda$} 
Draw new $X_t \sim  \cald_\calx, t > T$ \\
\For{ $i = 1, \dots, n$}{
$\hat{p}_{t,i} \gets \text{softmax}(f^P(X_{t}, \hat{y}_{t, <i}; \theta^P_T))$  \\
Observe $\widetilde{X}_t, h_{t,<i}$ from $f^P$ \\
$f^C_{t,i} \gets f^C( \widetilde{X}_t, h_{t,<i}, f^C_{t,<i}; \theta^C_T)$ \\
$\Tilde{p}_{t,i} \gets \hat{p}_{t,i} + \lambda f^C_{t,i}$ (via Eq.\ref{eqa:rectify}) \\
$\hat{y}_{t, i} \gets $ Decoding$( \Tilde{p}_{t,i})$
}
\KwRet{$(\hat{y}_{t, 1}, \dots, \hat{y}_{t, n})$}  
\end{algorithm}
\end{minipage}
};
\end{tikzpicture}

\vspace{-5pt}
\subsection{Training Paradigm.}
\vspace{-5pt}
Algorithm \ref{alg:training} outlines the process for jointly training the Pilot and Copilot model.
In training round $t \in [T]$, one sequence pair $(X_t, Y_t)$ is drawn from the data distribution $\cald$. For each token $i \in [n]$, we first compute the Pilot model's output distribution $\hat{p}_{t,i}$ (Line 5-7). 
We then retrieve information directly from the forward pass of the Pilot model and update the Mistake log $M_t$ by recording $\widetilde{X}_t$, $h_t$, and $\ell_t$ for each token (Line 9). Meanwhile, we compute the Pilot model's cross-entropy loss $\call_{t}^P$ and update its parameters (Lines 10-11). Next, we prepare the input for training the Copilot model. Given all collected previous training rounds' information, we draw a sample $(\widetilde{X}_t, h_t, \ell_t)$ from the updated mistake log $M_t$ (Line 13). We obtain the Copilot model's output $f^C_{t,i}$ for each token $i \in [n]$ (Line 15-17). Finally, we compute the Copilot model's RMSE loss $\call_{t}^C$ and update its parameters (Line 18-19). 
After $T$ rounds of iterative training, we obtain the final $\theta^P_T$ and $\theta^C_T$ for the Pilot and Copilot model, respectively. Note that this fine-tuning process can be readily extended to mini-batch stochastic gradient descent for scalability. 

\vspace{-5pt}
\subsection{Inference Paradigm}
\vspace{-5pt}
After learning from the Mistake Log, the Copilot model is deployed alongside the Pilot model to enhance inference-time generation. To avoid abuse of notation, we reuse the same symbols as in training.
Given a new input sequence $X_t \sim \cald_{\calx}, t >T$, where $X_t$ is not part of the training data, $t$ indexes the inference-time inputs and does not correspond to training rounds. As the objective of the Copilot model is to predict the token-level probability discrepancy $p_{t,i} - \hat{p}_{t,i}$, we directly use the Copilot model's output to rectify the Pilot model's prediction $\hat{p}_{t,i}$ towards the ground-truth $p_{t,i}$. Formally, the rectified predicted distribution is given by:
\begin{equation}
\label{eqa:rectify}
     \Tilde{p}_{t,i} =\hat{p}_{t,i} + \lambda f^C_{t,i} ,
\end{equation}
where $\lambda$ (typically set to 1) is a tunable hyperparameter controlling correction strength. Introducing $\lambda$ at inference allows for more flexible modulation, and as we later show in Section~\ref{section:why}, with a proper $\lambda$, the rectified $\Tilde{p}_{t,i}$ theoretically provides a closer approximation to the target distribution $p_{t,i}$. 
Algorithm \ref{algo:inference} outlines the overall inference paradigm. Given $X_t$, the Pilot model outputs a predicted distribution $\hat{p}_{t,i}$ at each token generation step $i \in [n]$ (Line 3). Subsequently, the Copilot model auto-regressively computes its output $f^C_{t,i}$ (Line 5). 
Finally, the rectified \( \Tilde{p}_{t,i} \) is obtained via Eq.\ref{eqa:rectify} and used to generate the next token via a decoding function (Lines 6-7). The inference process is adaptive and can optionally terminate upon generation of the \texttt{[EOS]} (end-of-sequence) token.

\vspace{-5pt}
\section{Analyses - Why Learn from the Mistake Log?}
\label{section:why}
\vspace{-5pt}

To elucidate the roles of the Mistake Log and Copilot model in enhancing the Pilot model’s inference-time performance, we present both theoretical and empirical analyses in this section.

\textbf{Theoretical Guarantee. }
Recall that the Copilot model $f^C$ is designed to analyze the Pilot model’s internal cognitive states \( \widetilde{X}_t, h_t\) via the collected Mistake Log $M_T$, and learns to predict errors measured by the token-level discrepancies \( \ell_t(p_{t,i}, \hat{p}_{t,i}) \). During inference, we use the rectified prediction as \( \tilde{p}_{t,i} = \hat{p}_{t,i} + \lambda f^C_{t,i} \). In the following analysis, we show that, under mild assumptions, the adjusted prediction \( \tilde{p}_{t,i} \) yields improved inference performance over the original estimate \( \hat{p}_{t,i} \).

Let $A^P, A^C$ denote the distributions over the function classes of $\theta^P, \theta^C$, induced by the randomness in the fine-tuning process. Let $[k]$ denote the $k$-th dimension of a vector in $\mathbb{R}^{|V|}$. Then, we define the expected error and variance of the Pilot and Copilot model at the $k$-th output dimension as: 
{\small
\[
\begin{aligned}
\epsilon_P^2 &= \mathbb{E}_{(X_t,Y_t) \sim \mathcal{D}} \left[ \left( p_{t,i}[k] - \mathbb{E}_{\theta^P \sim A^P}[\hat{p}_{t,i}[k] \mid \hat{y}_{t,<i}] \right)^2 \right], \\
\sigma_P^2 &= \mathbb{E}_{(X_t,Y_t) \sim \mathcal{D}} \big[ \text{Var}_{\theta^P \sim A^P}[\hat{p}_{t,i}[k] \mid \hat{y}_{t,<i}] \big], \\
\epsilon_C^2 &= \mathbb{E}_{\substack{\theta^P \sim A^P \\ (X_t,Y_t) \sim \mathcal{D}}} \big[ \big( p_{t,i}[k] - \hat{p}_{t,i}[k] - \mathbb{E}_{\theta^C \sim A^C}[f^C_{t,i}[k] \mid f^C_{t,<i}] \big)^2 \mid \hat{y}_{t,<i} \big], \\
\sigma_C^2 &= \mathbb{E}_{\substack{\theta^P \sim A^P \\ (X_t,Y_t) \sim \mathcal{D}}} \big[ \text{Var}_{\theta^C \sim A^C}[f^C_{t,i}[k] \mid f^C_{t,<i}] \mid \hat{y}_{t,<i} \big].
\end{aligned}
\]
}
\begin{theorem} 
\label{theo:main_theroem}
For any $k \in [|V|]$, suppose that $\epsilon_P^2 >0$ and $\epsilon_C<\sqrt{\epsilon_P^2+\sigma_P^2}$. Then there exists $\lambda_0>0$ such that for any $0<\lambda<\lambda_0$, the rectified prediction $\Tilde{p}_{t,i} =\hat{p}_{t,i} + \lambda f^C_{t,i}$ yields a strictly closer approximation to the ground-truth distribution $p_{t,i}$ at dimension $k$. Specifically, at the $i$-th token prediction step for $X_t \sim \cald_{\calx}$, we have:
{\small
\[
\begin{aligned}
\mathbb{E}_{\substack{\theta^P \sim A^P \\ \theta^C \sim A^C \\ (X_t, Y_t) \sim \mathcal{D}}} 
\left[ \left( p_{t,i}[k] - \widetilde{p}_{t,i}[k] \right)^2 \,\middle|\, f^C_{t,<i}, \hat{y}_{t,<i} \right] 
< 
\mathbb{E}_{\substack{\theta^P \sim A^P \\ (X_t, Y_t) \sim \mathcal{D}}} 
\left[ \left( p_{t,i}[k] - \hat{p}_{t,i}[k] \right)^2 \,\middle|\, \hat{y}_{t,<i} \right].
\end{aligned}
\]
}
\end{theorem}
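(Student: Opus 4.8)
The plan is to collapse the whole statement to a one-dimensional second-moment identity in the scalar $\lambda$. Fix the coordinate $k$ and condition throughout on $\hat y_{t,<i}$ and $f^C_{t,<i}$ (every expectation below is conditional on these; I suppress them for readability). Abbreviate $p := p_{t,i}[k]$ (deterministic given $(X_t,Y_t)$), $\hat p := \hat p_{t,i}[k]$ (random in $\theta^P \sim A^P$ and the data), $f := f^C_{t,i}[k]$ (random in $\theta^C \sim A^C$, and in $\theta^P$ and the data through $\widetilde X_t, h_{t,<i}$), and the Pilot residual $r := p - \hat p$, which is exactly the target the Copilot loss $\mathcal{L}^C$ regresses on. Since $p - \widetilde p_{t,i}[k] = r - \lambda f$, the two sides of the theorem differ by
\[
\mathbb{E}\big[(r-\lambda f)^2\big] - \mathbb{E}\big[r^2\big] \;=\; \lambda^2\,\mathbb{E}[f^2] - 2\lambda\,\mathbb{E}[rf] \;=\; \lambda\big(\lambda\,\mathbb{E}[f^2] - 2\,\mathbb{E}[rf]\big),
\]
a quadratic in $\lambda$ that vanishes at $0$ and, as soon as $\mathbb{E}[rf] > 0$, is strictly negative on $(0,\lambda_0)$ with $\lambda_0 := 2\,\mathbb{E}[rf]/\mathbb{E}[f^2]$ (note $\mathbb{E}[rf]>0$ forces $\mathbb{E}[f^2]>0$, so $\lambda_0$ is well-defined and positive; the relevant second moments are finite, $r$ being bounded in $[-1,1]$). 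So the entire theorem reduces to the single inequality $\mathbb{E}[rf] > 0$.

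To get that, I would carry out the standard bias--variance bookkeeping. First, decomposing over $\theta^P$ with the data held fixed, $\mathbb{E}_{\theta^P}[(p-\hat p)^2] = (p-\mathbb{E}_{\theta^P}\hat p)^2 + \mathrm{Var}_{\theta^P}(\hat p)$, and averaging over $\mathcal{D}$ gives $\mathbb{E}[r^2] = \epsilon_P^2 + \sigma_P^2$, which is precisely the theorem's right-hand side. Second, because $r$ carries no dependence on $\theta^C$, the tower property gives $\mathbb{E}[rf] = \mathbb{E}_{\theta^P,\mathcal{D}}\big[\,r\,\mathbb{E}_{\theta^C}[f]\,\big] = \mathbb{E}[r\bar f]$, where $\bar f := \mathbb{E}_{\theta^C}[f]$. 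Reading the definition of $\epsilon_C^2$ as $\epsilon_C^2 = \mathbb{E}_{\theta^P,\mathcal{D}}[(r-\bar f)^2]$ and expanding the square, $2\,\mathbb{E}[r\bar f] = \mathbb{E}[r^2] + \mathbb{E}[\bar f^2] - \epsilon_C^2 = (\epsilon_P^2 + \sigma_P^2 - \epsilon_C^2) + \mathbb{E}[\bar f^2]$. The hypothesis $\epsilon_C < \sqrt{\epsilon_P^2+\sigma_P^2}$ makes the first bracket strictly positive and $\mathbb{E}[\bar f^2]\ge 0$, hence $\mathbb{E}[rf] = \mathbb{E}[r\bar f] > 0$, finishing the reduction. (The assumption $\epsilon_P^2>0$ enters only to keep the regime $\epsilon_C<\sqrt{\epsilon_P^2+\sigma_P^2}$ non-vacuous.)

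The only step I expect to need genuine care is the conditioning/independence bookkeeping, not the algebra: one must justify that the Pilot residual $r$ is independent of $\theta^C$ so the tower step $\mathbb{E}[rf]=\mathbb{E}[r\bar f]$ is legitimate, that the nested conditioning on $\hat y_{t,<i}$ and $f^C_{t,<i}$ is handled consistently across the four defined quantities $\epsilon_P,\sigma_P,\epsilon_C,\sigma_C$ and the claimed inequality, and that the bias--variance split is applied in the right order (over $\theta^P$ with the data fixed, then over $\mathcal{D}$). Once this is pinned down, what remains is exactly the bounded-quadratic computation displayed above, and choosing $\lambda_0 = 2\,\mathbb{E}[rf]/\mathbb{E}[f^2]$ completes the proof.
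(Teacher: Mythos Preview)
Your proof is correct and follows the same overall strategy as the paper: reduce the claim to a quadratic in $\lambda$, identify the right-hand side as $\epsilon_P^2+\sigma_P^2$ via bias--variance over $\theta^P$, and use the tower property over $\theta^C$ to handle the cross term. The one genuine difference is in how that cross term is controlled. The paper writes $r-\lambda f=(1-\lambda)r+\lambda(r-f)$, computes $\mathbb{E}[(r-f)^2]=\epsilon_C^2+\sigma_C^2$, and then bounds $\mathbb{E}[r(r-\bar f)]\le\epsilon_C\sqrt{\epsilon_P^2+\sigma_P^2}$ by Cauchy--Schwarz, arriving at an explicit
\[
\lambda_0=\frac{2\sqrt{\epsilon_P^2+\sigma_P^2}\bigl(\sqrt{\epsilon_P^2+\sigma_P^2}-\epsilon_C\bigr)}{\bigl(\sqrt{\epsilon_P^2+\sigma_P^2}-\epsilon_C\bigr)^2+\sigma_C^2}.
\]
You instead expand $\epsilon_C^2=\mathbb{E}[(r-\bar f)^2]$ directly to obtain $2\,\mathbb{E}[rf]=(\epsilon_P^2+\sigma_P^2-\epsilon_C^2)+\mathbb{E}[\bar f^2]>0$, which is more elementary (no Cauchy--Schwarz) and yields the \emph{exact} threshold $\lambda_0=2\,\mathbb{E}[rf]/\mathbb{E}[f^2]$, necessarily at least as large as the paper's bound-derived value. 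The trade-off is that the paper's $\lambda_0$ is expressed purely in terms of the four named quantities $\epsilon_P,\sigma_P,\epsilon_C,\sigma_C$, whereas yours involves the unnamed moments $\mathbb{E}[rf]$ and $\mathbb{E}[f^2]$; for the theorem as stated (existence of some $\lambda_0>0$), your route is the cleaner one.
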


\begin{remark}
\label{remark:assumption}
The assumption \( \epsilon_C < \sqrt{\epsilon_P^2 + \sigma_P^2} \) in Theorem~\ref{theo:main_theroem} allows the Copilot model \( f^C(\cdot; \theta^C) \) to have a larger bias than the bias $\epsilon_P$ of the Pilot model \( f^P(\cdot; \theta^P) \), i.e., \( \epsilon_P^2 < \epsilon_C^2 < \epsilon_P^2 + \sigma_P^2 \). 
\end{remark}

Theorem~\ref{theo:main_theroem} suggests that the rectified prediction $\tilde{p}_{t,i}$ after incorporating the Copilot model achieves strictly lower expected error at $k$-dimension under mild assumptions and a proper $\lambda$, indicating the Copilot helps improve the inference performance of the Pilot. The full proof is provided in Appendix~\ref{sec:proof}. In addition, Remark~\ref{remark:assumption} implies that the Copilot model can improve inference performance without needing to match the Pilot’s accuracy in isolation. This insight motivates us to apply a relatively smaller size of a Copilot to complement the Pilot in our empirical implementation.

\begin{figure}[h]
  \centering
  \includegraphics[width=\linewidth]{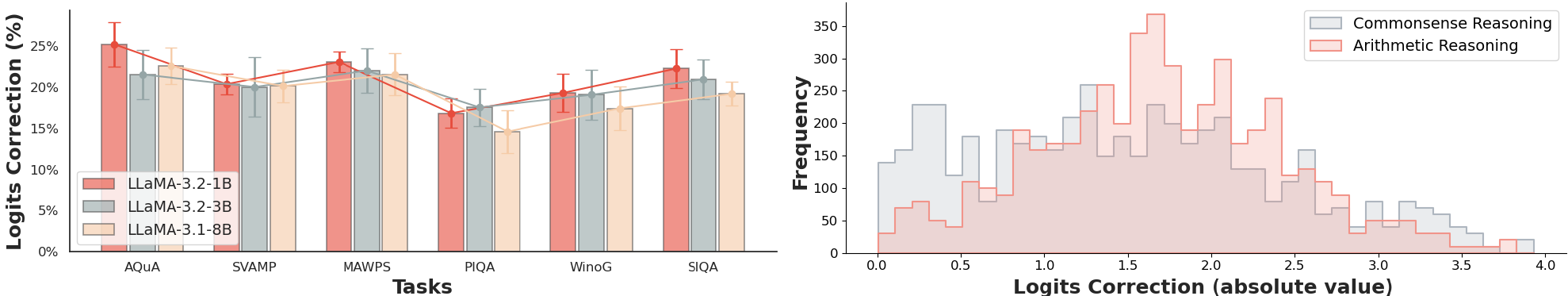}
  \caption{\textbf{Logits Correction by Copilot.} We visualize the logits correction introduced by a 1B Copilot model (computed as $|$\textit{Fused logits} $-$ \textit{Pilot logits}$|$) to highlight the \textbf{shift} by the Copilot's rectification. \textbf{Left:} Percentage of logits correction over original Pilot's output logits range for three LLaMA-3 Pilot models. \textbf{Right:} Distribution of logits correction magnitudes across reasoning types.}
  \label{fig:logits_difference}
  \vspace{-5pt}
\end{figure}

\begin{figure}[h]
    \centering
    \includegraphics[width=\linewidth]{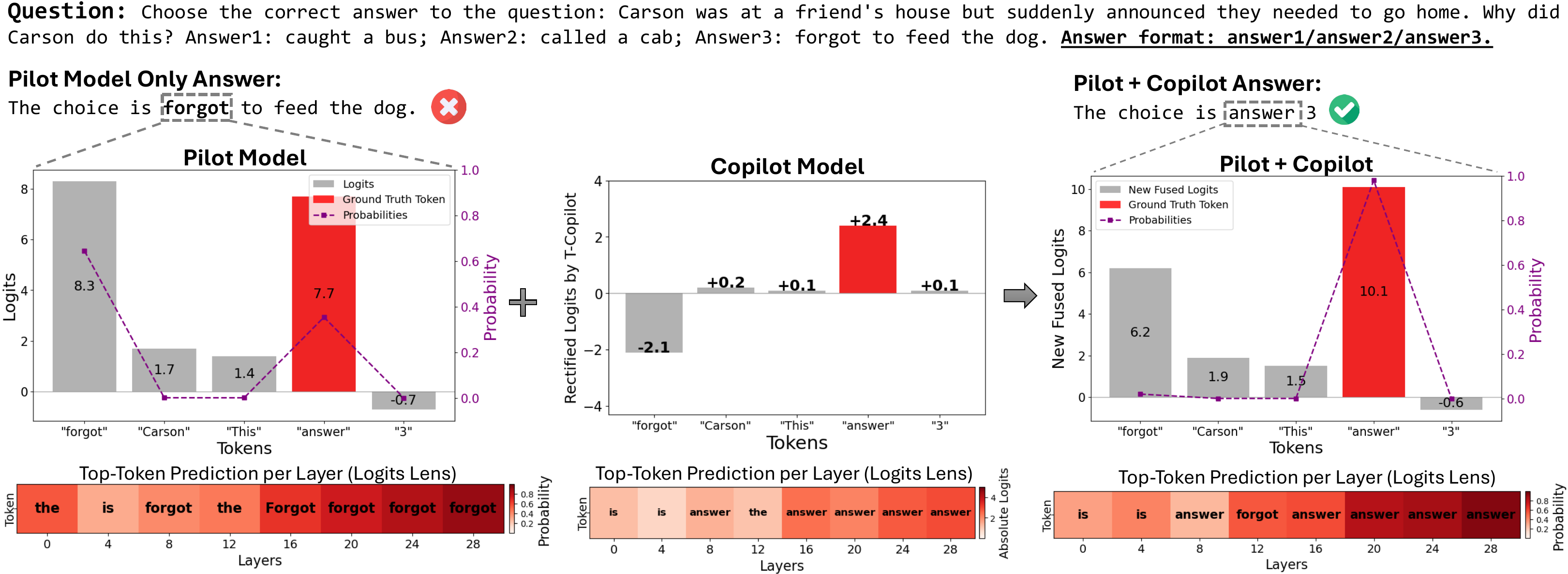}
    \caption{\footnotesize \textbf{Example of Copilot’s Token-level Rectification on SIQA.} The token-level formatting error (`forgot') originates during the Pilot's mid-way generation and is corrected (`answer') by incorporating the Copilot. 
    }
    \label{fig:error_case_2}
\vspace{-15pt}
\end{figure}

\textbf{Empirical Analysis. } Complementing our theoretical analysis, we empirically examine the rectification effectiveness of the Copilot model during inference. We leave the setups in Appendix \ref{appendix:empirical_analysis_setup}.
Figure~\ref{fig:logits_difference} illustrates the average logits correction induced by the 1B Copilot model across different Pilot models and reasoning categories. Given that the typical logits range is approximately \([-10, 10]\), the observed shifts on the logits distribution indicate a clear and consistent adjustment on the final predictions by the Copilot model.

We further verify that this Copilot's adjustment indeed steers the token prediction toward the correct direction: We analyze representative error patterns frequently observed in the Pilot model's output, particularly factual and formatting mistakes. Figure~\ref{fig:error_case_2} shows a detailed example of token-level logits rectification on Pilot model LLaMA-3.2-3B by the 1B Copilot, visualized using the layer-wise Logits Lens \cite{belrose2023eliciting}.
At mid-inference, the Pilot does not follow the correct answer format and makes mistakes (the correct token `answer' has a high but suboptimal logit). The Copilot rectifies the prediction by decreasing the logit of the incorrect token `forgot' and amplifying that of the correct token, thereby correcting the token prediction error. We leave analyses on other error patterns in Appendix \ref{appendix:error_pattern}.

\vspace{-6pt}
\section{Empirical Evaluations} \label{sec:exp}
\vspace{-6pt}

\textbf{Tasks and Datasets.} To comprehensively evaluate T-Copilot, we utilize a broad suite of reasoning and generation tasks: (i) Commonsense reasoning: PIQA \cite{bisk2020piqa}, HellaSwag \cite{zellers2019hellaswag}, WinoGrande \cite{sakaguchi2021winogrande}, BoolQ \cite{clark2019boolq}, SIQA \cite{sap2019socialiqa}, and OpenbookQA (OBQA) \cite{mihaylov2018can}.
(ii) Arithmetic reasoning: AQuA \cite{ling2017program}, GSM8K \cite{cobbe2021training}, MAWPS \cite{koncel2016mawps}, and SVAMP \cite{patel2021nlp}.
and (iii) Downstream Recommendation: Beauty \cite{he2016ups} and LastFM \cite{rozemberczki2020characteristic}.
Detailed dataset descriptions are provided in Appendix \ref{appendix:datasets}.

\textbf{Implementation Details.} For T-Copilot, we construct the Copilot model using the same type of decoder architecture as the Pilot model to ensure consistency. 
We use the AdamW optimizer and Cosine learning rate scheduler for both Pilot and Copilot models. We modify the \texttt{generate} in HuggingFace Transformers \cite{huggingface-transformers} to perform token-level logits fusion and rectified next-token generation during inference. 
All experiments are conducted on NVIDIA A100 GPUs. We leave all hyperparameter setups and training/inference details in Appendix \ref{appendix:Hyperparameters}.

\begin{table*}[!t]
    \centering
    \caption{Experiment results (\%) of incorporating T-Copilot on encoder-decoder/decoder-only backbone models. Results are averaged over 3 independent runs. We report the relative improvement on the backbone Pilot models. T-Copilot boosts existing LLMs on ten reasoning tasks by 2.0\%–34.5\%.}
    \label{tab:generation_res_1}
    \small
    \resizebox{\textwidth}{!}{
    \begin{tabular}{l|l|ccccccclcccccl}
        \toprule
        \multirow{2}{*}{Type} & \multirow{2}{*}{Model} & \multicolumn{7}{c}{Commonsense Reasoning (Acc.\,$\uparrow$)} & \multirow{2}{*}{\textbf{Impr.}} & \multicolumn{5}{c}{Arithmetic Reasoning (Acc.\,$\uparrow$)} & \multirow{2}{*}{\textbf{Impr.}} \\
        \cmidrule(lr){3-9} \cmidrule(lr){11-15}
        & & PIQA & WinoG. & HellaS. & BoolQ & SIQA & OBQA & Avg. & & AQuA & GSM8K & MAWPS & SVAMP & Avg. \\
        \midrule
        \multirow{7}{*}{T5}
        & FLAN-T5-small & 60.3 & 52.1 & 31.6 & 57.9 & 47.8 & 29.2 & 46.5 & & 19.6 & 5.6 & 14.7 & 5.3 & 11.3 & \\
        & \textbf{+\,T-Copilot-small}
        & 63.1 & 54.4 & 34.9 & 61.7 & 52.7 & 32.9 & 50.0 & \textcolor{impr}{\textbf{$\uparrow$ 7.5\%}} & 24.8 & 7.4 & 20.6 & 8.0 & 15.2 & \textcolor{impr}{\textbf{$\uparrow$ 34.5\%}} \\
        \cmidrule{2-16}
        & FLAN-T5-base & 65.4 & 54.6 & 36.8 & 61.1 & 48.6 & 29.6 & 49.4 & & 22.8 & 7.2 & 27.1 & 6.3 & 15.9 & \\
        & \textbf{+\,T-Copilot-base}
        & 67.3 & 56.2 & 39.7 & 62.5 & 54.3 & 34.7 & 52.5 & \textcolor{impr}{\textbf{$\uparrow$ 6.3\%}} & 24.4 & 9.3 & 32.4 & 10.3 & 19.1 & \textcolor{impr}{\textbf{$\uparrow$ 20.1\%}} \\
        \cmidrule{2-16}
        & FLAN-T5-large & 70.5 & 60.4 & 49.5 & 62.2 & 58.1 & 31.7 & 55.4 & & 23.2 & 9.9 & 36.7 & 9.7 & 19.9 & \\
        & \textbf{+\,T-Copilot-small}
        & 72.2 & 61.9 & 51.3 & 63.2 & 59.8 & 32.6 & 56.8 & \textcolor{impr}{\textbf{$\uparrow$ 2.5\%}} & 24.7 & 11.3 & 37.2 & 11.6 & 21.2 & \textcolor{impr}{\textbf{$\uparrow$ 6.5\%}} \\
        & \textbf{+\,T-Copilot-base}
        & 72.8 & 63.6 & 52.3 & 63.7 & 60.8 & 34.2 & 57.9 & \textcolor{impr}{\textbf{$\uparrow$ 4.5\%}} & 25.1 & 11.6 & 39.8 & 13.8 & 22.6 & \textcolor{impr}{\textbf{$\uparrow$ 13.6\%}} \\
        \midrule
        \multirow{7}{*}{LLaMA}
        & LLaMA-3.2-1B & 77.5 & 71.1 & 61.8 & 63.9 & 71.9 & 66.8 & 68.8 & & 25.6 & 27.3 & 77.1 & 47.3 & 44.3 & \\
        & \textbf{+\,T-Copilot-1B}
        & 80.2 & 73.7 & 63.3 & 65.5 & 74.9 & 68.9 & 71.1 & \textcolor{impr}{\textbf{$\uparrow$ 3.3\%}} & 28.3 & 32.2 & 81.5 & 51.6 & 48.4 & \textcolor{impr}{\textbf{$\uparrow$ 9.3\%}} \\
        \cmidrule{2-16}
        & LLaMA-3.2-3B & 83.3 & 79.6 & 89.4 & 69.1 & 77.4 & 75.6 & 79.1 & & 33.1 & 55.3 & 86.1 & 64.2 & 59.7 & \\
        & \textbf{+\,T-Copilot-1B}
        & 84.1 & 82.6 & 91.1 & 70.3 & 78.6 & 77.2 & 80.7 & \textcolor{impr}{\textbf{$\uparrow$ 2.0\%}} & 36.6 & 58.2 & 89.1 & 68.7 & 63.2 & \textcolor{impr}{\textbf{$\uparrow$ 5.9\%}} \\
        & \textbf{+ T-Copilot-3B} 
        & 85.6 & 83.7 & 91.3 & 72.8 & 79.2 & 81.3 & 82.3 & \textcolor{impr}{\textbf{$\uparrow$ 4.0\%}} & 40.1 & 63.1 & 91.2 & 71.4 & 66.5 & \textcolor{impr}{\textbf{$\uparrow$ 11.4\%}} \\
        \cmidrule{2-16}
        & LLaMA-3.1-8B & 85.4 & 84.3 & 90.9 & 69.6 & 79.9 & 82.6 & 82.1 & & 37.3 & 63.5 & 89.1 & 73.6 & 65.9 & \\
        & \textbf{+\,T-Copilot-1B}
        & 86.2 & 86.8 & 93.5 & 71.8 & 82.7 & 83.2 & 84.0 & \textcolor{impr}{\textbf{$\uparrow$ 2.3\%}} & 38.9 & 66.1 & 90.8 & 75.4 & 67.8 & \textcolor{impr}{\textbf{$\uparrow$ 2.9\%}} \\
        \midrule 
        \multirow{6}{*}{Qwen}
        & Qwen2.5-3B & 83.6 & 77.5 & 89.8 & 63.4 & 77.6 & 84.6 & 79.4 & & 55.9 & 71.4 & 89.6 & 81.5 & 74.6 & \\
        &\textbf{+ T-Copilot-0.5B}
        & 85.4 & 79.1 & 91.3 & 66.8 & 78.1 & 86.0 & 81.1 & \textcolor{impr}{\textbf{$\uparrow$ 2.1\%}} & 57.3 & 74.2 & 91.8 & 82.8 & 76.5 & \textcolor{impr}{\textbf{$\uparrow$ 2.5\%}} \\
        & \textbf{+\,T-Copilot-3B}
        & 87.8 & 81.7 & 94.0 & 68.7 & 79.9 & 89.4 & 83.6 & \textcolor{impr}{\textbf{$\uparrow$ 5.3\%}} & 59.4 & 76.8 & 92.6 & 83.5 & 78.1 & \textcolor{impr}{\textbf{$\uparrow$ 4.7\%}} \\
        \cmidrule{2-16}
        & Qwen2.5-7B & 87.2 & 82.1 & 91.4 & 71.2 & 79.3 & 89.1 & 83.4 & & 61.0 & 75.3 & 91.2 & 84.8 & 78.1 & \\
        & \textbf{+ T-Copilot-0.5B}
        & 89.3 & 85.3 & 93.5 & 73.6 & 80.0 & 92.1 & 85.6 & \textcolor{impr}{\textbf{$\uparrow$ 2.6\%}} & 61.4 & 78.2 & 93.0 & 86.5 & 79.8 & \textcolor{impr}{\textbf{$\uparrow$ 2.2\%}} \\
        & \textbf{+\,T-Copilot-3B}
        & 92.5 & 87.2 & 95.3 & 74.8 & 84.3 & 94.9 & 88.2 & \textcolor{impr}{\textbf{$\uparrow$ 5.8\%}} & 64.2 & 79.7 & 94.8 & 88.1 & 81.7 & \textcolor{impr}{\textbf{$\uparrow$ 4.6\%}} \\
        \bottomrule
    \end{tabular}
    }
\vspace{-10pt}
\end{table*}

\textbf{Models and Baselines.} We incorporate T-Copilot with varying backbone Pilot models. For encoder-decoder Pilots, we utilize T5 \cite{raffel2020exploring} and FLAN-T5 \cite{chung2024scaling} across small/base/large variants. For decoder-only Pilots, we employ multiple models from LLaMA-3 \cite{dubey2024llama} and Qwen2.5 \cite{yang2024qwen2} families. We denote \textbf{T-Copilot-small/base/0.5B/1B/3B} as the Copilot model on different scales. Detailed model configuration and implementation details are provided in Appendix \ref{appendix:model_configuration}. We compare against three baseline types: (i) Pilot-only models as described above. (ii) Frontier LLMs with comparable and larger parameters, including LLaMA-3.1-8B \cite{dubey2024llama}, Gemma-2-9B \cite{team2024gemma}, and Qwen2.5-14B. (iii) Layer/Adapter expansion methods, including MoE models \cite{shazeer2017outrageously} (Mistral-7B, Ministral-8B), LLaMA/Mistral-Pro-8B \cite{wu2024llama}, Mergekit-9B \cite{goddard2024arcee}, and TIES \cite{yadav2024ties}. 
Detailed baseline descriptions are provided in Appendix \ref{appendix:baseline_detail}.

\subsection{Incorporating T-Copilot into Pilot Models Yields Better Performance}

\textbf{Effectiveness of Copilot in Enhancing Pilot.}
Table \ref{tab:generation_res_1} presents the performance gains of incorporating T-Copilot into the Pilot models across different model scales and types. T-Copilot consistently improves performance across all T5, LLaMA, and Qwen models on 10 commonsense and arithmetic reasoning tasks. In particular, a lightweight Copilot (e.g., T-Copilot-small) can deliver meaningful improvements (6.5\% on arithmetic) when paired with a much larger Pilot model (e.g., FLAN-T5-large). Moreover, scaling up the Copilot model leads to additional improvement, underscoring its effectiveness in rectifying the Pilot model's predictions during inference.

\begin{table*}[!t]
    \centering
    \caption{Performance comparison (\%) with baselines \textbf{under matched parameter scales}. Results are averaged over 3 runs. Adding T-Copilot consistently surpasses baselines of equal or even larger size.}
    \label{tab:generation_res_2}
    \small
    \resizebox{\textwidth}{!}{
    \begin{tabular}{@{}l l c c c c c c l c c c c c l@{}}
        \toprule
        \multirow{2}{*}{Model} & \multirow{2}{*}{Params} & \multicolumn{7}{c}{Commonsense Reasoning (Acc. $\uparrow$)} & \multicolumn{5}{c}{Arithmetic Reasoning (Acc. $\uparrow$)} \\
        \cmidrule(l){3-9} \cmidrule(l){10-14}
        & & PIQA & WinoG. & HellaS. & BoolQ & SIQA & OBQA & Avg. & AQuA & GSM8K & MAWPS & SVAMP & Avg. \\
        \midrule[0.35pt]
        LLaMA-3.1-8B & 8B & 85.4 & 84.3 & 90.9 & 69.6 & 79.9 & 82.6 & 82.1 & 37.3 & 63.5 & 89.1 & 73.6 & 65.9 \\
        \rowcolor{gray!14} LLaMA-3.2-3B + T-Copilot-3B & 6B \textcolor{red}{\small(-2B)} & 85.6 & 83.7 & 91.3 & 72.8 & 79.2 & 81.3 & \textbf{82.3} & 40.1 & 63.1 & 91.2 & 71.4 & \textbf{66.5}  \\
        \midrule
        Qwen2.5-7B & 7B & 87.2 & 82.1 & 91.4 & 71.2 & 79.3 & 89.1 & 83.4 & 61.0 & 75.3 & 91.2 & 84.8 & \textbf{78.1} \\
        \rowcolor{gray!14} Qwen2.5-3B + T-Copilot-3B & 6B \textcolor{red}{\small(-1B)}& 87.8 & 81.7 & 94.0 & 68.7 & 79.9 & 89.4 & \textbf{83.6} & 59.4 & 76.8 & 92.6 & 83.5 & \textbf{78.1} \\
        \midrule
        Qwen2.5-14B & 14B & 91.8 & 85.6 & 94.3 & 75.2 & 84.5 & 93.1 & 87.4 & 63.5 & 79.5 & 92.4 & 87.9 & 80.8 \\
        \rowcolor{gray!14} Qwen2.5-7B + T-Copilot-3B & 10B \textcolor{red}{\small(-4B)} & 92.5 & 87.2 & 95.3 & 74.8 & 84.3 & 94.9 & \textbf{88.2} & 64.2 & 79.7 & 94.8 & 88.1 & \textbf{81.7} \\

        \midrule[0.35pt]  \midrule[0.35pt]
        \multicolumn{14}{@{}c}{\textbf{\textit{Comparison with Layer/Adapter Expansion Baselines}}}\\
        \midrule
        Mistral-Pro-8B & 8B & 83.1 & 81.9 & 86.1 & 70.8 & 76.1 & 80.6 & 79.8 & 35.5 & 54.4 & 88.2 & 68.5 & 61.7 \\
        LLaMA-Pro-8B & 8B & 88.4 & 81.4 & 86.9 & 73.9 & 76.1 & 77.8 & 80.8 & 38.2 & 57.2 & 92.5 & 63.5 & 62.9 \\
        Ministral-8B & 8B & 85.7 & 84.1 & 91.3 & 70.3 & 77.5 & 81.3 & 81.7 & 37.4 & 62.9 & 90.2 & 73.2 & 65.9 \\
        \rowcolor{gray!14} LLaMA-3.2-3B + T-Copilot-3B & 6B  \textcolor{red}{\small(-2B)} & 85.6 & 83.7 & 91.3 & 72.8 & 79.2 & 81.3 & \textbf{82.3} & 40.1 & 63.1 & 91.2 & 71.4 & \textbf{66.5} \\
        \midrule
        MergeKit-9B & 9B & 86.1 & 84.7 & 91.1 & 71.1 & 79.3 & 80.2 & 82.1 & 37.0 & 65.2 & 90.3 & 75.2 & 66.9 \\
        \rowcolor{gray!14} LLaMA-3.1-8B + T-Copilot-1B & 9B & 86.2 & 86.8 & 93.5 & 71.8 & 82.7 & 83.2 & \textbf{84.0} & 38.9 & 66.1 & 90.8 & 75.4 & \textbf{67.8} \\
        \bottomrule
    \end{tabular}
    }
\vspace{-15pt}
\end{table*}

\textbf{Comparison with Size-Matched Baselines.} As shown in Table~\ref{tab:generation_res_2}, we first compare our method against stronger models with larger parameters under the same model backbones. While LLaMA-3.2-3B initially lags significantly behind LLaMA-3.1-8B, incorporating T-Copilot-3B enables the model to outperform LLaMA-3.1-8B, despite using 2B fewer total parameters. Similarly, for the Qwen2.5 series, incorporating T-Copilot-3B enables the smaller Qwen2.5-7B to surpass Qwen2.5-14B with 4B fewer parameters. To provide a broader perspective, we also compare with strong baselines from different methods and model types. For instance, although LLaMA-3.2-3B originally trails behind models like Ministral-8B and LLaMA-Pro-8B, incorporating T-Copilot-3B enables it to outperform the strongest baseline under the 8B scale, Ministral-8B, while maintaining a 2B parameter advantage. Due to page limits, full comparison results are provided in Appendix \ref{appendix:full_table_report_baseline}. 

\textbf{Downstream Tasks.} Additional evaluation of T-Copilot and baseline comparisons on downstream recommendation tasks is provided in Appendix~\ref{appendix:rec_full_res}.

\subsection{Efficiency, Transferability, and Scalability}

\begin{figure}[!t] %
  \centering
  \begin{minipage}{\linewidth}
    \centering
    \includegraphics[width=\linewidth]{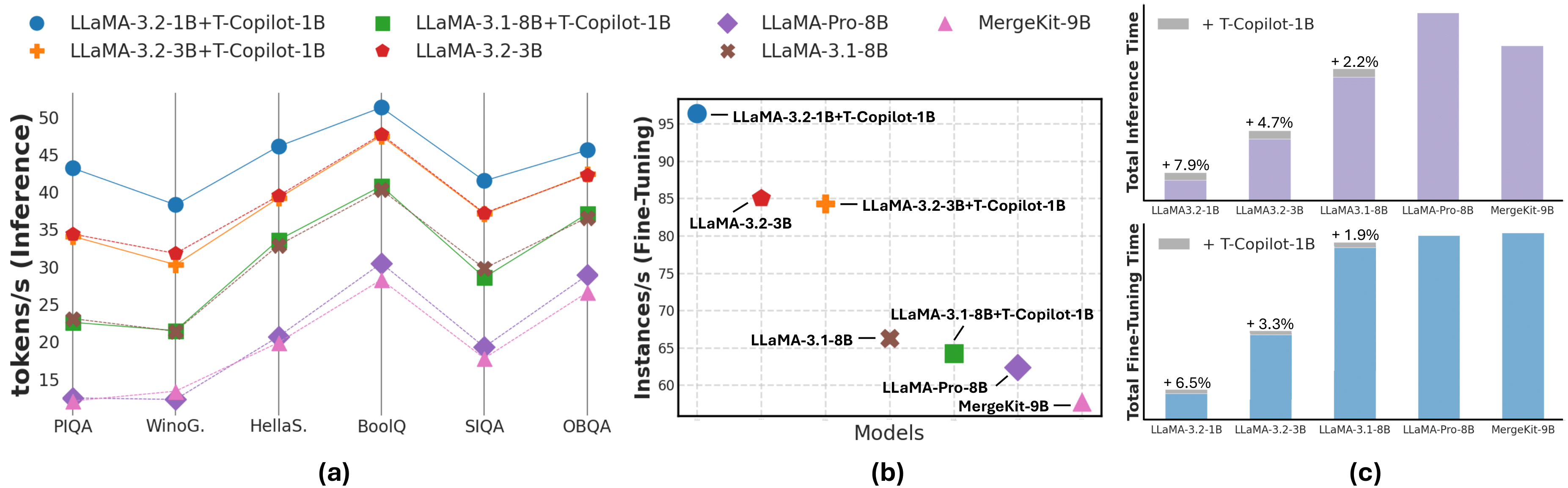}
    \caption{Efficiency Analysis on T-Copilot during fine-tuning and inference. \textbf{(a)} Inference model throughput. \textbf{(b)} Fine-tuning running speeds. \textbf{(c)} Overall training and inference time overhead.} 
    \label{fig:efficiency_main}
  \end{minipage}
\end{figure}

\textbf{Efficiency.} To thoroughly evaluate T-Copilot's running efficiency, we compare against Pilot and baseline models with the same LLaMA-3 backbone architecture under similar parameter scales. As shown in Figure~\ref{fig:efficiency_main}, T-Copilot maintains comparable inference throughput (Figure~\ref{fig:efficiency_main} (a)) and training speed (Figure~\ref{fig:efficiency_main} (b)) to its corresponding Pilot models, while incurring only a 4\% marginal average increase in time overhead (Figure~\ref{fig:efficiency_main} (c)). In contrast, other baselines such as LLaMA-Pro-8B and MergeKit-9B suffer from significantly higher latency and computational costs relative to their base model LLaMA-3.1-8B. We provide a more detailed inference latency report in Appendix \ref{appendix:full_effeciency} (Table \ref{tab:latency}) and discuss the architectural advantage of our model design in Appendix \ref{appendix:architecture_effeciency}.

\begin{table}[!t]
\centering
\small
\caption{\textbf{T-Copilot Transferability Results.} We report the performance of T-Copilot paired with new Pilot models across four reasoning tasks. The results demonstrate that the Copilot model remains effective for the new Pilot models without being jointly trained. \vspace{3pt}}
\label{tab:transfer_res}
\resizebox{0.95\linewidth}{!}{%
\begin{tabular}{lccccc}
    \toprule
    T-Copilot-1B & HellaSwag & BoolQ & GSM8K & SVAMP & \textbf{Overall Impr.}\\
    \midrule 
    \textit{with} new LLaMA-3.2-1B & 63.1 & 65.2 & 32.2 & 51.4 & \textcolor{impr}{\textbf{$\uparrow$ 6.1\%}}\\
    \textit{with} new LLaMA-3.3-3B & 91.4 & 70.2 & 58.8 & 68.5 & \textcolor{impr}{\textbf{$\uparrow$ 4.2\%}} \\
    \textit{with} new LLaMA-3.1-8B & 93.1 & 71.7 & 66.0 & 75.8 & \textcolor{impr}{\textbf{$\uparrow$ 2.4\%}}\\
    \bottomrule
    \end{tabular}
    }
\end{table}

\textbf{Transferability.} In the T-Copilot learning framework, the Copilot model is fine-tuned alongside but separately from the Pilot model. Since the same type of models generally have similar learning trajectories under identical training settings, we further investigate if the Copilot model can leverage the mistake log of one Pilot model and still be effective on another Pilot model of the similar type.

We conduct controlled experiments on LLaMA-3 series models in which we directly apply a finetuned 1B Copilot model to new Pilot models during inference. The new Pilot model shares the same architecture as the original one but is trained independently. Note that the Copilot model does not "see" or "learn" any information from the new Pilot model, as they are not jointly trained during finetuning.
In Table~\ref{tab:transfer_res}, transferring the Copilot model leads to a slight $\pm$0.2\% performance difference compared to applying the Copilot to the initial Pilot models (jointly training together). We hypothesize that the minor discrepancy is due to the hardware inference differences between the original and new Pilot models. Nonetheless, the transferred Copilot model still delivers substantial performance gains for the new Pilot and consistently outperforms competing baselines. These results demonstrate that T-Copilot’s error-correction capabilities are not tightly coupled to a specific Pilot model and can be effectively transferred without additional rounds of fine-tuning.

\begin{figure}[!ht]
  \centering
  \includegraphics[width=\linewidth]{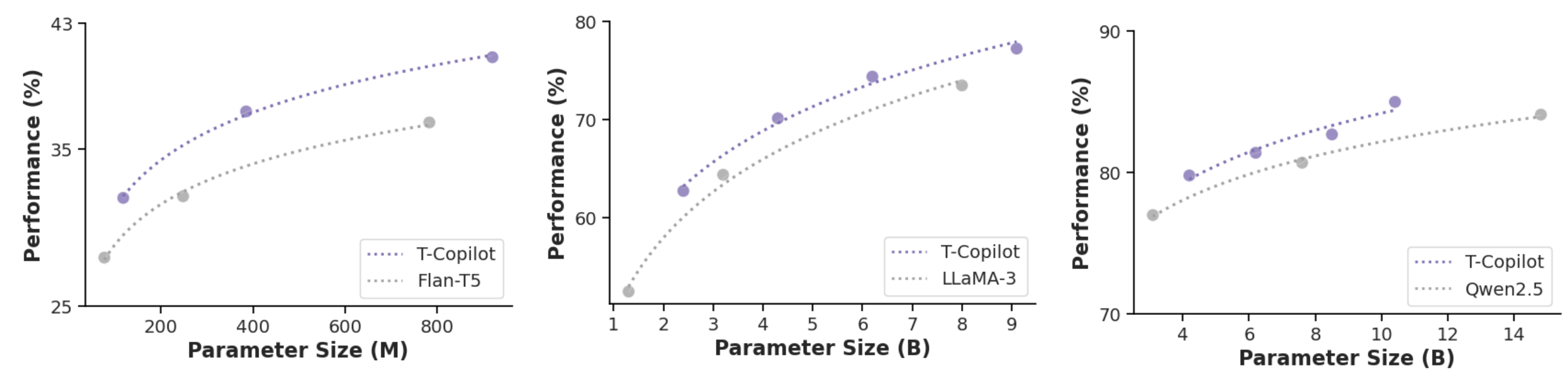}
  \caption{\textbf{Inference Scaling Laws} for T-Copilot. We evaluate the average accuracy of T-Copilot and backbone frontier LLMs across all reasoning tasks at varying model scales. The results are shown for three architectures: FLAN-T5 (left), LLaMA-3 (middle), and Qwen2.5 (right).}
  \label{fig:scaling_law}
\end{figure}

\textbf{Scalability.} Figure \ref{fig:scaling_law} illustrates the relationship between accuracy and model parameter size for T-Copilot. Overall, incorporating the Copilot model consistently demonstrates improved performance as model size increases. We analyze the relationship between performance accuracy ($A$) and model parameter size ($N$) in billions. The derived equations for our method are as follows: for Flan-T5 backbones: $A \simeq 8.74 \cdot \log_{10}(N) + 40.17$; For LLaMA-3 backbones: $A \simeq 29.58 \cdot \log_{10}(N) + 50.20$, and for Qwen2.5 backbones: $A \simeq 12.40 \cdot \log_{10}(N) + 71.80$.

\textbf{Ablation Studies.} Detailed ablation studies on T-Copilot, including model design choices, input insertion patterns, and the effect of the hyperparameter \( \lambda \), are presented in Appendix~\ref{appendix:additional_ablation}.

\section{Related Works}
\paragraph{LLMs Supervised Fine-tuning.} Supervised fine-tuning (SFT) serves as the standard post-training method for specializing pre-trained LLMs to downstream tasks \cite{wei2021finetuned, sanh2021multitask,zhang2023instruction}. It enables models to incorporate task-specific knowledge and improves their performance in domain-relevant settings \cite{zhang2023instruction, parthasarathy2024ultimate,zheng2024fine}. While effective, SFT often suffers from misalignment between training-time objectives and inference-time behavior \cite{min2022rethinking,wang2024loss}, leading to suboptimal generalization. Recent work has explored parameter-efficient tuning methods~\cite{hu2021lora, li2021prefix}, alongside advanced adaptation strategies \cite{sung2022lst, liu2024dora, wu2024reft} that improve learning effectiveness and efficiency. These methods primarily focus on model capacity and optimization rather than leveraging learning dynamics. Building upon prior SFT methods, our approach is compatible with existing fine-tuning frameworks and further improves by incorporating model-internal signals into the fine-tuning process. By adaptively learning from mistake patterns observed during fine-tuning, T-Copilot enables error-aware prediction and helps reduce the gap between training and inference performance.

\vspace{-8pt}
\paragraph{Self-refinement in Language Models.} Recent research has explored various self-refinement techniques in LLMs to generate high-quality outputs. Models either iteratively prompt themselves with updated responses~\cite{madaan2023self, gulcehre2023reinforced, shinn2023reflexion} or optimize their behavior using external human or synthetic feedback~\cite{ouyang2022training, yao2023react, miao2023selfcheck,zou2025reasonflux}. Orthogonal to external supervision such as additional prompting, multi-stage feedback, or explicit reward optimization, our work focuses on capturing model-internal signals during fine-tuning to achieve token-level rectification, without modifying the training objective or data distribution. We leave the additional related work and discussions in the Appendix \ref{appendix:related_work}.

\vspace{-5pt}
\section{Conclusion}
\vspace{-5pt}

In this paper, we introduce \modelname, a novel learning framework that enhances Transformer-based Pilot models by integrating an auxiliary Copilot model during fine-tuning. By capturing the Pilot model’s learning signals in a Mistake Log during fine-tuning, the Copilot model learns to rectify the Pilot’s logits at inference time, enabling error-aware predictions. We provide both theoretical and empirical evidence that our method improves the Pilot model’s inference predictions. Experiments on 12 benchmarks demonstrate the effectiveness, efficiency, scalability, and transferability of \modelname. Discussions on limitations are provided in Appendix \ref{appendix:limitation}.

\vspace{-5pt}
\section*{Acknowledgment}
\vspace{-5pt}
We would like to thank the anonymous reviewers and AC for their constructive feedback during the review process. We also thank Sirui Chen, Tianxin Wei, Zhining Liu, Xiao Lin and other members from UIUC iDEA-iSAIL Lab for their constructive feedback on this work.
This work is supported by National Science Foundation under Award No. IIS-2117902, and Agriculture and Food Research Initiative (AFRI) grant no. 2020-67021-32799/project accession no.1024178 from the USDA National Institute of Food and Agriculture. The views and conclusions are those of the authors and should not be interpreted as representing the official policies of the funding agencies or the government.

\clearpage
\bibliographystyle{plain}
\bibliography{neurips_2025}

\clearpage

\appendix
\textbf{\Large Appendix}
\renewcommand{\contentsname}{Table of Contents}
\tableofcontents
\addtocontents{toc}{\protect\setcounter{tocdepth}{2}}

\newpage

\section{Additional Details on \modelname}
\label{appendix:method}
\subsection{Architectural Advantages of T-Copilot.} 
\label{appendix:architecture_effeciency}
In Section \ref{section:model_design}, we introduce the Copilot model inherited from the standard decoder module in a Transformer \cite{vaswani2017attention}. However, our model design exhibits several key advantages compared to the standard decoder module: Specifically, our Copilot model (i) eliminates the need for positional embeddings to preprocess input sequences, (ii) does not require a softmax layer to normalize high-dimensional logits distributions, and (iii) avoids waiting for the computation of key-value (KV) pairs from the previous layer. These architectural design choices distinguish our method from layer adaptation methods \cite{goddard2024arcee, wu2024llama} that modify internal Transformer layers, which inherently introduce additional computational overhead. As a result, our method minimizes the gap in efficiency between our framework and vanilla models.

\subsection{Decoder-only Copilot Details}
\label{sec:decoderappend}

The decoder-only copilot model $f^C$ inherits its structure from the pilot model and processes three inputs from the Mistake Log: the token-level discrepancy sequence $\ell_t$, the embedded input sequence $\widetilde{X}_t$, and the pilot model's hidden states $h_{t}$. Note that, different from the encoder-decoder Pilot model, $\widetilde{X}_t$ here is derived from the input sequence $X_t$ after the positional embedding layer.

In the Decoder-only Copilot model, as stated in Section \ref{section:model_design}, the alternating attention mechanisms effectively mirror the encoder-decoder structure, enabling the decoder-only Copilot to leverage information inside the Mistake Log corrected from the Pilot model.
The loss function (RMSE) and target values $\ell_t(p_{t,i},\hat{p}_{t,i})$ for the Decoder-only Copilot model remain identical to those used for the encoder-decoder Copilot version. The fine-tuning and inference paradigm are also the same as the encoder-decoder Copilot model, as stated in Algorithm \ref{alg:training} and \ref{algo:inference}.

\subsection{Buffer of the Mistake Log} 
As described in Section~\ref{sec:method}, we maintain a Mistake Log to record the Pilot model’s internal learning signals, which serve as training data for the Copilot model. To store this information efficiently with minimal GPU and CPU memory overhead, we detach all relevant outputs from the Pilot’s forward pass and store them in a CPU-resident buffer. By default, we use a fixed-size buffer that retains the most recent 128 training rounds. The buffer is updated at each training step, and all Copilot training samples are drawn exclusively from it. This design keeps the additional memory footprint lightweight, typically under 500MB of the CPU memory and less than 200MB of the GPU memory.

\subsection{Conceptual Rationale on T-Copilot's Learning Objective}

We elaborate here on the motivation for adopting the RMSE loss in training T-Copilot. Recall that the objective of the Copilot is to predict the distribution discrepancy, defined as
\[
\ell_t(p_{t,i}, \hat{p}_{t,i}) = p_{t,i} - \hat{p}_{t,i},
\]
in our formulation. Note that $\ell_t(p_{t,i}, \hat{p}_{t,i})$ is not a valid probability distribution, since 
$\sum_i \ell_t(p_{t,i}, \hat{p}_{t,i}) \neq 1$. Therefore, it is natural to formulate this task as a \textbf{regression problem}, 
for which the \textbf{RMSE} loss is commonly adopted.

In contrast, common language modeling objectives such as Cross-Entropy (CE) and KL Divergence are designed for \textbf{distribution fitting} and are 
\textbf{not directly applicable} in this setting, unless we manipulate $\ell_t(p_{t,i}, \hat{p}_{t,i})$ with a softmax function 
to resemble a valid distribution. In that case, however, the loss would no longer optimize the original discrepancy 
(i.e., $p_{t,i} - \hat{p}_{t,i}$), leading to potential information loss. In Appendix \ref{appendix:additional_ablation}, we further emprically validate the effectiveness of the RMSE loss for T-Copilot training.

\clearpage

\newcommand\IncFigS[2]{\begin{center}\includegraphics[scale=#2]{#1}\end{center}}
\newcommand\FigS[2]{\begin{center}\IncFigS{#1}{#2}\end{center}}
\newcommand\IncFigW[2]{\includegraphics[width=#2\textwidth]{#1}}
\newcommand\FigW[2]{\begin{center}\IncFigW{#1}{#2}\end{center}}
\newcommand\Item[1]{\begin{itemize}#1\end{itemize}}
\newcommand\Enum[1]{\begin{enumerate}#1\end{enumerate}}
\newcommand\Table[2]{\begin{center}\begin{tabular}{#1}#2\end{tabular}\end{center}}

\allowdisplaybreaks[1]

\newcommand\SEC\section
\newcommand\SSEC\subsection
\newcommand\SSSEC\subsubsection
\newcommand\PM{$\,\pm\,$}
\newcommand\BIN[1]{\mathbin{#1}}
\newcommand\RM[1]{\mathrm{#1}}
\newcommand\IT[1]{\mathit{#1}}
\newcommand\FR[1]{\mathfrak{#1}}
\newcommand\Unit[1]{\,\RM{#1}}
\newcommand\PU[1]{/\RM{#1}}
\newcommand\DC{\Unit{^\circ C}}
\newcommand\RMDC{\RM{^\circ C}}
\newcommand\DD{\,\RM d}
\newcommand\EQ[1]{\begin{equation}#1\end{equation}}
\newcommand\EQN[2]{\EQ{#2\label{#1}}}
\newcommand\BM[1]{\boldsymbol{#1}}
\newcommand\BB[1]{\mathbb{#1}}
\newcommand\CAL[1]{\mathcal{#1}}
\newcommand\TP{^\mathsf{T}}
\newcommand\Tp{\TP}
\newcommand\To[1]{\overset{\mathrm{#1}}{\to}}
\newcommand\UB[1]{^{(#1)}}
\newcommand\OP[1]{\operatorname{#1}}
\newcommand\MAT[1]{\begin{bmatrix}#1\end{bmatrix}}
\newcommand\AL[1]{\begin{align}#1\end{align}}
\newcommand\AM[1]{\begin{align*}#1\end{align*}}
\newcommand\ALN[2]{\AL{#2\label{#1}}}
\newcommand\GA[1]{\begin{gather}#1\end{gather}}
\newcommand\GB[1]{\begin{gather*}#1\end{gather*}}
\newcommand\HAT[1]{\widehat{#1}}
\newcommand\BAR[1]{\overline{#1}}
\newcommand\TLD[1]{\widetilde{#1}}
\newcommand\Prb{\mathbb{P}}
\newcommand\Exp{\mathbb{E}}
\newcommand\Tr{\operatorname{tr}}
\newcommand\Var{\operatorname{Var}}
\newcommand\Cov{\operatorname{Cov}}
\newcommand\IID{\overset{\mathrm{iid}}{\sim}}
\newcommand\VS{\longleftrightarrow}
\newcommand\diag{\operatorname{diag}}
\newcommand\Proof{\emph{Proof.}}
\newcommand\ITEM[1]{\textbf{#1}}
\newcommand\REM{\ITEM{Remark.} }
\newtheorem{DEF}{\textbf{Definition}}
\newtheorem{THM}{\textbf{Theorem}}
\newtheorem{PRP}{\textbf{Proposition}}
\newtheorem{COR}{\textbf{Corollary}}
\newtheorem{LEM}{\textbf{Lemma}}
\newtheorem{ASS}{\textbf{Assumption}}

\section{Proof of Theorem \ref{theo:main_theroem}} \label{sec:proof}

Given the model parameters $\theta^P$ and $\theta^C$, we denote the Pilot model as $f^P(\cdot; \theta^P)$ and the Copilot model as $f^C(\cdot; \theta^C)$. Let $X_t \sim \mathcal{D}_{\mathcal{X}}$ represent the input sequence at inference step $t$, and $\widetilde{X}_t$ be the input representation of the $X_t$; $Y_t$ be the corresponding ground-truth answer for the input sequence $X_t$. For the $t$-th token prediction during inference, recall that:
\[
\begin{aligned}
p_{t,i} &= \Prb\left( y_{t,i} \mid X_t, \hat{y}_{t,<i} \right), \\
\hat{p}_{t,i}  & = \text{softmax}(f^P(X_t, \hat{y}_{t, <i}; \theta^P)), \\
f^C_{t,i} & = f^C( \widetilde{X}_t, h_{t,<i}, f_{2, t,<i}; \theta^C).
\end{aligned}
\]

Let $A^P, A^C$ denote the distributions over the function classes of $\theta^P, \theta^C$, induced by the randomness in the fine-tuning process. Let $[k]$ denote the $k$-th dimension of a vector in $\mathbb{R}^{|V|}$. Then, we define the expected error and variance of the Pilot and Copilot model at the $k$-th output dimension as: 

\[
\begin{aligned}
\epsilon_P^2 &:= \mathbb{E}_{(X_t,Y_t) \sim \mathcal{D}} \left[ \left( p_{t,i}[k] - \mathbb{E}_{\theta^P \sim A^P}[\hat{p}_{t,i}[k] \mid \hat{y}_{t,<i}] \right)^2 \right] <\infty, \\
\sigma_P^2 &:= \mathbb{E}_{(X_t,Y_t) \sim \mathcal{D}} \big[ \text{Var}_{\theta^P \sim A^P}[\hat{p}_{t,i}[k] \mid \hat{y}_{t,<i}] \big] <\infty , \\
\epsilon_C^2 &:= \mathbb{E}_{\substack{\theta^P \sim A^P \\ (X_t,Y_t) \sim \mathcal{D}}} \big[ \big( p_{t,i}[k] - \hat{p}_{t,i}[k] - \mathbb{E}_{\theta^C \sim A^C}[f^C_{t,i}[k] \mid f^C_{t,<i}] \big)^2 \mid \hat{y}_{t,<i} \big] <\infty , \\
\sigma_C^2 &:= \mathbb{E}_{\substack{\theta^P \sim A^P \\ (X_t,Y_t) \sim \mathcal{D}}} \big[ \text{Var}_{\theta^C \sim A^C}[f^C_{t,i}[k] \mid f^C_{t,<i}] \mid \hat{y}_{t,<i} \big] <\infty
\end{aligned}
\]

where we assume $f^P$ and $f^C$ have the bounded error and variance at $k$-th dimension. Here, \( p_{t,i}[k] \) denotes the ground-truth probability assigned to the token at dimension \( k \in [|V|] \) of the vocabulary, for the \( i \)-th token prediction step within input sequence \( X_t \).
Then, we have the following theorem, which corresponds to Theorem \ref{theo:main_theroem} in the main body.

\begin{theorem} [Restate]
\label{theo:main}
Given $A^P$, $A^C$, the Pilot model $f^P(\cdot; \theta^P)$, the Copilot model $f^C(\cdot; \theta^C)$, and a data distribution $\mathcal{D}$. For any $k \in [|V|]$, suppose the Pilot model is imperfect, i.e., $\epsilon_P^2 > 0$, and the Copilot model's error satisfies $\epsilon_C<\sqrt{\epsilon_P^2+\sigma_P^2}$. Then there exists a constant $\lambda_0>0$ such that for any $0<\lambda<\lambda_0$, the rectified prediction $\Tilde{p}_{t,i} =\hat{p}_{t,i} + \lambda f^C_{t,i}$ yields a strictly closer approximation to the ground-truth distribution $p_{t,i}$ at dimension $k$. Specifically, at the $i$-th token prediction step for $X_t \sim \cald_{\calx}$, we have:
{\small
\[
\begin{aligned}
\mathbb{E}_{\substack{\theta^P \sim A^P \\ \theta^C \sim A^C \\ (X_t, Y_t) \sim \mathcal{D}}} 
\left[ \left( p_{t,i}[k] - \widetilde{p}_{t,i}[k] \right)^2 \,\middle|\, f^C_{t,<i}, \hat{y}_{t,<i} \right] 
< 
\mathbb{E}_{\substack{\theta^P \sim A^P \\ (X_t, Y_t) \sim \mathcal{D}}} 
\left[ \left( p_{t,i}[k] - \hat{p}_{t,i}[k] \right)^2 \,\middle|\, \hat{y}_{t,<i} \right].
\end{aligned}
\]
}
\end{theorem}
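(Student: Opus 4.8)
The plan is to reduce the claim to a one-dimensional bias–variance computation at the fixed coordinate $k$, conditioned on the generated prefixes $f^C_{t,<i}$ and $\hat y_{t,<i}$. Write $e_P := p_{t,i}[k] - \hat p_{t,i}[k]$ for the (random) Pilot residual and $c := f^C_{t,i}[k]$ for the Copilot output. Then $p_{t,i}[k] - \widetilde p_{t,i}[k] = e_P - \lambda c$, so the left-hand side of the inequality is $\mathbb{E}[(e_P - \lambda c)^2] = \mathbb{E}[e_P^2] - 2\lambda\,\mathbb{E}[e_P c] + \lambda^2\,\mathbb{E}[c^2]$, where the expectation is over $\theta^P \sim A^P$, $\theta^C \sim A^C$, and $(X_t,Y_t)\sim\mathcal{D}$. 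The right-hand side is exactly $\mathbb{E}[e_P^2]$. So the statement is equivalent to showing that the quadratic $q(\lambda) := \lambda^2\,\mathbb{E}[c^2] - 2\lambda\,\mathbb{E}[e_P c]$ is strictly negative for all sufficiently small $\lambda > 0$, which holds as soon as $\mathbb{E}[e_P c] > 0$: indeed then $q(\lambda) < 0$ for $0 < \lambda < \lambda_0 := 2\,\mathbb{E}[e_P c]/\mathbb{E}[c^2]$ (and if $\mathbb{E}[c^2]=0$ the whole expression is $-2\lambda\mathbb{E}[e_Pc]<0$ for every $\lambda>0$). So the entire theorem collapses to the single inequality $\mathbb{E}[e_P c] > 0$.

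The next step is to compute $\mathbb{E}[e_P c]$ by iterated expectation, peeling off the Copilot randomness first. Conditioning on $\theta^P$ and $(X_t,Y_t)$, the inner expectation over $\theta^C$ gives $\mathbb{E}_{\theta^C}[c] = \mathbb{E}_{\theta^C}[f^C_{t,i}[k]\mid f^C_{t,<i}]$; call this $\bar c(\theta^P, X_t, Y_t)$. So $\mathbb{E}[e_P c] = \mathbb{E}_{\theta^P, \mathcal{D}}[\,e_P\cdot \bar c\,]$. Now introduce the Copilot's target: the quantity the Copilot is trained to predict is precisely $e_P$ itself (the token-level discrepancy $p_{t,i}-\hat p_{t,i}$), so $\epsilon_C^2 = \mathbb{E}_{\theta^P,\mathcal{D}}[(e_P - \bar c)^2]$. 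Expanding, $\mathbb{E}[(e_P-\bar c)^2] = \mathbb{E}[e_P^2] - 2\,\mathbb{E}[e_P\bar c] + \mathbb{E}[\bar c^2]$, hence
\[
\mathbb{E}[e_P\bar c] = \tfrac12\big(\mathbb{E}[e_P^2] + \mathbb{E}[\bar c^2] - \epsilon_C^2\big).
\]
Thus $\mathbb{E}[e_P c] = \mathbb{E}[e_P\bar c] > 0$ provided $\epsilon_C^2 < \mathbb{E}[e_P^2] + \mathbb{E}[\bar c^2]$, and since $\mathbb{E}[\bar c^2] \ge 0$ it suffices to show $\epsilon_C^2 < \mathbb{E}[e_P^2]$ — or more precisely, to identify $\mathbb{E}[e_P^2]$ with $\epsilon_P^2 + \sigma_P^2$ so that the hypothesis $\epsilon_C < \sqrt{\epsilon_P^2+\sigma_P^2}$ is exactly what is needed. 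For that last identification, apply the bias–variance decomposition to the Pilot: conditioning on $(X_t,Y_t)$, $\mathbb{E}_{\theta^P}[(p_{t,i}[k]-\hat p_{t,i}[k])^2] = (p_{t,i}[k] - \mathbb{E}_{\theta^P}[\hat p_{t,i}[k]])^2 + \mathrm{Var}_{\theta^P}[\hat p_{t,i}[k]]$; taking $\mathbb{E}_{\mathcal{D}}$ of both sides gives $\mathbb{E}[e_P^2] = \epsilon_P^2 + \sigma_P^2$ by definition of those two quantities. Chaining the inequalities: $\epsilon_C^2 < \epsilon_P^2 + \sigma_P^2 = \mathbb{E}[e_P^2] \le \mathbb{E}[e_P^2] + \mathbb{E}[\bar c^2]$, so $\mathbb{E}[e_P c] > 0$, and the choice $\lambda_0 = 2\mathbb{E}[e_Pc]/\mathbb{E}[c^2]$ (or $\lambda_0 = \infty$ if $\mathbb{E}[c^2]=0$) completes the argument.

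I expect the main obstacle to be bookkeeping about the conditioning rather than any deep inequality: one must be careful that the conditioning sets $f^C_{t,<i}$ and $\hat y_{t,<i}$ are treated consistently across the Pilot expectation (which conditions only on $\hat y_{t,<i}$) and the Copilot expectation (which additionally conditions on $f^C_{t,<i}$), and that the outer $\mathbb{E}_{(X_t,Y_t)\sim\mathcal D}$ in the theorem statement matches the definitions of $\epsilon_P^2,\sigma_P^2,\epsilon_C^2$ given just before it. There is also a minor subtlety in that the cross term expansion requires $\mathbb{E}[e_P c] = \mathbb{E}[e_P \bar c]$, which uses the tower property together with the fact that $e_P$ is measurable with respect to $(\theta^P, X_t, Y_t)$ and hence can be pulled out of the inner $\theta^C$-expectation; this is where the conditional-independence structure of the generative process (Copilot randomness drawn given the Pilot) is used. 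Once the conditioning is pinned down, the remaining steps are the two textbook decompositions above plus solving a scalar quadratic, and the constant $\lambda_0$ can be made fully explicit. A small remark worth including: the argument shows the improvement is strict as long as $\epsilon_P^2 > 0$ is not actually needed for strictness of $q(\lambda)<0$ — strictness there comes from $\mathbb{E}[e_Pc]>0$ — but $\epsilon_P^2>0$ guarantees the Pilot is genuinely imperfect so the statement is non-vacuous, and it also ensures $\mathbb{E}[e_P^2]>0$ so the gap $\mathbb{E}[e_P^2]-\epsilon_C^2$ can be positive under the hypothesis.
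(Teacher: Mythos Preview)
Your proposal is correct and follows a route that differs from the paper's in one key place. Both arguments share the bias--variance identity $\mathbb{E}[e_P^2]=\epsilon_P^2+\sigma_P^2$ and both reduce the claim to a scalar quadratic in $\lambda$. The difference is in how the cross term is handled. You expand $(e_P-\lambda c)^2$ directly and isolate $\mathbb{E}[e_Pc]$, then compute it \emph{exactly} via the polarization identity $\mathbb{E}[e_P\bar c]=\tfrac12(\mathbb{E}[e_P^2]+\mathbb{E}[\bar c^2]-\epsilon_C^2)$; the hypothesis $\epsilon_C^2<\epsilon_P^2+\sigma_P^2=\mathbb{E}[e_P^2]$ then forces $\mathbb{E}[e_Pc]>0$ without any inequality step. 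The paper instead writes $e_P-\lambda c=(1-\lambda)e_P+\lambda(e_P-c)$, expands, and bounds the resulting cross term $\mathbb{E}[e_P(e_P-\bar c)]$ by Cauchy--Schwarz, $\le \epsilon_C\sqrt{\epsilon_P^2+\sigma_P^2}$. Your approach is slightly more elementary (no Cauchy--Schwarz) and yields the sharpest threshold $\lambda_0=2\mathbb{E}[e_Pc]/\mathbb{E}[c^2]$, but this threshold is not expressed purely in the four named constants $\epsilon_P,\sigma_P,\epsilon_C,\sigma_C$ since $\mathbb{E}[\bar c^2]$ enters. The paper's Cauchy--Schwarz step sacrifices sharpness but produces a fully explicit
\[
\lambda_0=\frac{2\sqrt{\epsilon_P^2+\sigma_P^2}\big(\sqrt{\epsilon_P^2+\sigma_P^2}-\epsilon_C\big)}{\big(\sqrt{\epsilon_P^2+\sigma_P^2}-\epsilon_C\big)^2+\sigma_C^2}.
\]
Your closing remark is also on point: the assumption $\epsilon_P^2>0$ is not what drives strictness---the hypothesis $\epsilon_C<\sqrt{\epsilon_P^2+\sigma_P^2}$ already forces $\mathbb{E}[e_P^2]>0$---but it makes the statement non-vacuous.
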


\begin{proof}
For brevity, we omit the explicit expectation condition over the Pilot model's previously generated tokens $\hat{y}_{t,<i}$, the Copilot model's preceding outputs $f^C_{t,<i}$, and the dimension index $[k]$ in the following proof.

Firstly, by the law of total expectation w.r.t.$(X_t, Y_t)$ and the bias-variance decomposition w.r.t.\ $\hat{p}_{t,i}$,
\AM{
&\Exp_{\begin{subarray}{l}\theta^P\sim A^P\\(X_t,Y_t) \sim \cald\end{subarray}}[(p_{t,i}-\hat{p}_{t,i})^2]\\
={}&\Exp_{(X_t,Y_t) \sim \cald}[\Exp_{\theta^P\sim A^P}[(p_{t,i}-\hat{p}_{t,i})^2]]\\
={}&\Exp_{(X_t,Y_t) \sim \cald}[(p_{t,i}-\Exp_{\theta^P\sim A^P}[\hat{p}_{t,i}])^2+\Var_{\theta^P\sim A^P}[\hat{p}_{t,i}]]\\
={}&\Exp_{(X_t,Y_t) \sim \cald}[(p_{t,i}-\Exp_{\theta^P\sim A^P}[\hat{p}_{t,i}])^2]+\Exp_{(X_t,Y_t) \sim \cald}[\Var_{\theta^P\sim A^P}[\hat{p}_{t,i}]]\\
={}&\epsilon_P^2+\sigma_P^2.}
Secondly, by the law of total expectation w.r.t.\ $(X_t, Y_t)$ and $\hat{p}_{t,i}$ and the bias-variance decomposition w.r.t.\ $f^C_{t,i}$,
\AM{
&\Exp_{\begin{subarray}{l}\theta^P\sim A^P\\\theta^C \sim  A^C\\(X_t,Y_t) \sim \cald\end{subarray}}[(p_{t,i}-\hat{p}_{t,i}-f^C_{t,i})^2]\\
={}&\Exp_{\begin{subarray}{l}\theta^P\sim A^P\\(X_t,Y_t) \sim \cald\end{subarray}}[\Exp_{\theta^C \sim  A^C}[(p_{t,i}-\hat{p}_{t,i}-f^C_{t,i})^2]]\\
={}&\Exp_{\begin{subarray}{l}\theta^P\sim A^P\\(X_t,Y_t) \sim \cald\end{subarray}}[(p_{t,i}-\hat{p}_{t,i}-\Exp_{\theta^C \sim  A^C}[f^C_{t,i}])^2+\Var_{\theta^C \sim  A^C}[f^C_{t,i}]]\\
={}&\Exp_{\begin{subarray}{l}\theta^P\sim A^P\\(X_t,Y_t) \sim \cald\end{subarray}}[(p_{t,i}-\hat{p}_{t,i}-\Exp_{\theta^C \sim  A^C}[f^C_{t,i}])^2]+\Exp_{\begin{subarray}{l}\theta^P\sim A^P\\(X_t,Y_t) \sim \cald\end{subarray}}[\Var_{\theta^C \sim  A^C}[f^C_{t,i}]]\\
={}&\epsilon_C^2+\sigma_C^2
.}
Thirdly, by the law of total expectation w.r.t.\ $(X_t, Y_t)$ and $\hat{p}_{t,i}$ and the Cauchy--Schwarz inequality,
\AM{
&\Exp_{\begin{subarray}{l}\theta^P\sim A^P\\\theta^C \sim  A^C\\(X_t,Y_t) \sim \cald\end{subarray}}[(p_{t,i}-\hat{p}_{t,i})(p_{t,i}-\hat{p}_{t,i}-f^C_{t,i})]\\
={}&\Exp_{\begin{subarray}{l}\theta^P\sim A^P\\(X_t,Y_t) \sim \cald\end{subarray}}[\Exp_{\theta^C \sim  A^C}[(p_{t,i}-\hat{p}_{t,i})(p_{t,i}-\hat{p}_{t,i}-f^C_{t,i})]]\\
={}&\Exp_{\begin{subarray}{l}\theta^P\sim A^P\\(X_t,Y_t) \sim \cald\end{subarray}}[(p_{t,i}-\hat{p}_{t,i})(p_{t,i}-\hat{p}_{t,i}-\Exp_{\theta^C \sim  A^C}[f^C_{t,i}])]\\
\le{}&\sqrt{\Exp_{\begin{subarray}{l}\theta^P\sim A^P\\(X_t,Y_t) \sim \cald\end{subarray}}[(p_{t,i}-\hat{p}_{t,i})^2]\cdot\Exp_{\begin{subarray}{l}\theta^P\sim A^P\\(X_t,Y_t) \sim \cald\end{subarray}}[(p_{t,i}-\hat{p}_{t,i}-\Exp_{\theta^C \sim  A^C}[f^C_{t,i}])^2]}\\
={}&\sqrt{(\epsilon_P^2+\sigma_P^2)\cdot\epsilon_C^2}=\epsilon_C\sqrt{\epsilon_P^2+\sigma_P^2}
.}
Together, as long as $0\le\lambda\le1$,
\AM{
&\Exp_{\begin{subarray}{l}\theta^P\sim A^P\\\theta^C \sim  A^C\\(X_t,Y_t) \sim \cald\end{subarray}}[(p_{t,i}-\hat{p}_{t,i}-\lambda f^C_{t,i})^2]-\Exp_{\begin{subarray}{l}\theta^P\sim A^P\\(X_t,Y_t) \sim \cald\end{subarray}}[(p_{t,i}-\hat{p}_{t,i})^2]\\
={}&\Exp_{\begin{subarray}{l}\theta^P\sim A^P\\\theta^C \sim  A^C\\(X_t,Y_t) \sim \cald\end{subarray}}[((1-\lambda)(p_{t,i}-\hat{p}_{t,i})+\lambda(p_{t,i}-\hat{p}_{t,i}-f^C_{t,i}))^2]-\Exp[(p_{t,i}-\hat{p}_{t,i})^2]\\
={}&(1-\lambda)^2\Exp_{\begin{subarray}{l}\theta^P\sim A^P\\\theta^C \sim  A^C\\(X_t,Y_t) \sim \cald\end{subarray}}[(p_{t,i}-\hat{p}_{t,i})^2]+\lambda^2\Exp_{\begin{subarray}{l}\theta^P\sim A^P\\\theta^C \sim  A^C\\(X_t,Y_t) \sim \cald\end{subarray}}[(p_{t,i}-\hat{p}_{t,i}-f^C_{t,i})^2]\\&+2(1-\lambda)\lambda\Exp_{\begin{subarray}{l}\theta^P\sim A^P\\\theta^C \sim  A^C\\(X_t,Y_t) \sim \cald\end{subarray}}[(p_{t,i}-\hat{p}_{t,i})(p_{t,i}-\hat{p}_{t,i}-f^C_{t,i})]-\Exp_{\begin{subarray}{l}\theta^P\sim A^P\\(X_t,Y_t) \sim \cald\end{subarray}}[(p_{t,i}-\hat{p}_{t,i})^2]\\
={}&(1-\lambda)^2\Exp_{\begin{subarray}{l}\theta^P\sim A^P\\(X_t,Y_t) \sim \cald\end{subarray}}[(p_{t,i}-\hat{p}_{t,i})^2]+\lambda^2\Exp_{\begin{subarray}{l}\theta^P\sim A^P\\\theta^C \sim  A^C\\(X_t,Y_t) \sim \cald\end{subarray}}[(p_{t,i}-\hat{p}_{t,i}-f^C_{t,i})^2]\\&+2(1-\lambda)\lambda\Exp_{\begin{subarray}{l}\theta^P\sim A^P\\\theta^C \sim  A^C\\(X_t,Y_t) \sim \cald\end{subarray}}[(p_{t,i}-\hat{p}_{t,i})(p_{t,i}-\hat{p}_{t,i}-f^C_{t,i})]-\Exp_{\begin{subarray}{l}\theta^P\sim A^P\\(X_t,Y_t) \sim \cald\end{subarray}}[(p_{t,i}-\hat{p}_{t,i})^2]\\
={}&((1-\lambda)^2-1)\Exp_{\begin{subarray}{l}\theta^P\sim A^P\\(X_t,Y_t) \sim \cald\end{subarray}}[(p_{t,i}-\hat{p}_{t,i})^2]+\lambda^2\Exp_{\begin{subarray}{l}\theta^P\sim A^P\\\theta^C \sim  A^C\\(X_t,Y_t) \sim \cald\end{subarray}}[(p_{t,i}-\hat{p}_{t,i}-f^C_{t,i})^2]\\&+2(1-\lambda)\lambda\Exp_{\begin{subarray}{l}\theta^P\sim A^P\\\theta^C \sim  A^C\\(X_t,Y_t) \sim \cald\end{subarray}}[(p_{t,i}-\hat{p}_{t,i})(p_{t,i}-\hat{p}_{t,i}-f^C_{t,i})]\\
\le{}&((1-\lambda)^2-1)(\epsilon_P^2+\sigma_P^2)+\lambda^2(\epsilon_C^2+\sigma_C^2)+2(1-\lambda)\lambda\epsilon_C\sqrt{\epsilon_P^2+\sigma_P^2}\\
={}&\lambda\big(\big(\big(\sqrt{\epsilon_P^2+\sigma_P^2}-\epsilon_C\big)^2+\sigma_C^2\big)\lambda-2\sqrt{\epsilon_P^2+\sigma_P^2}\big(\sqrt{\epsilon_P^2+\sigma_P^2}-\epsilon_C\big)\big)
,}
which is strictly smaller than $0$ as long as
\AM{0<\lambda<\min\bigg\{1,\frac{2\sqrt{\epsilon_P^2+\sigma_P^2}\big(\sqrt{\epsilon_P^2+\sigma_P^2}-\epsilon_C\big)}{\big(\sqrt{\epsilon_P^2+\sigma_P^2}-\epsilon_C\big)^2+\sigma_C^2}\bigg\}=:\lambda_0.&\qedhere}
\end{proof}

\clearpage

\section{Additional Empirical Analysis}
\label{appendix:empirical_analysis_setup}

\begin{figure}[!th]
    \centering
    \includegraphics[width=\linewidth]{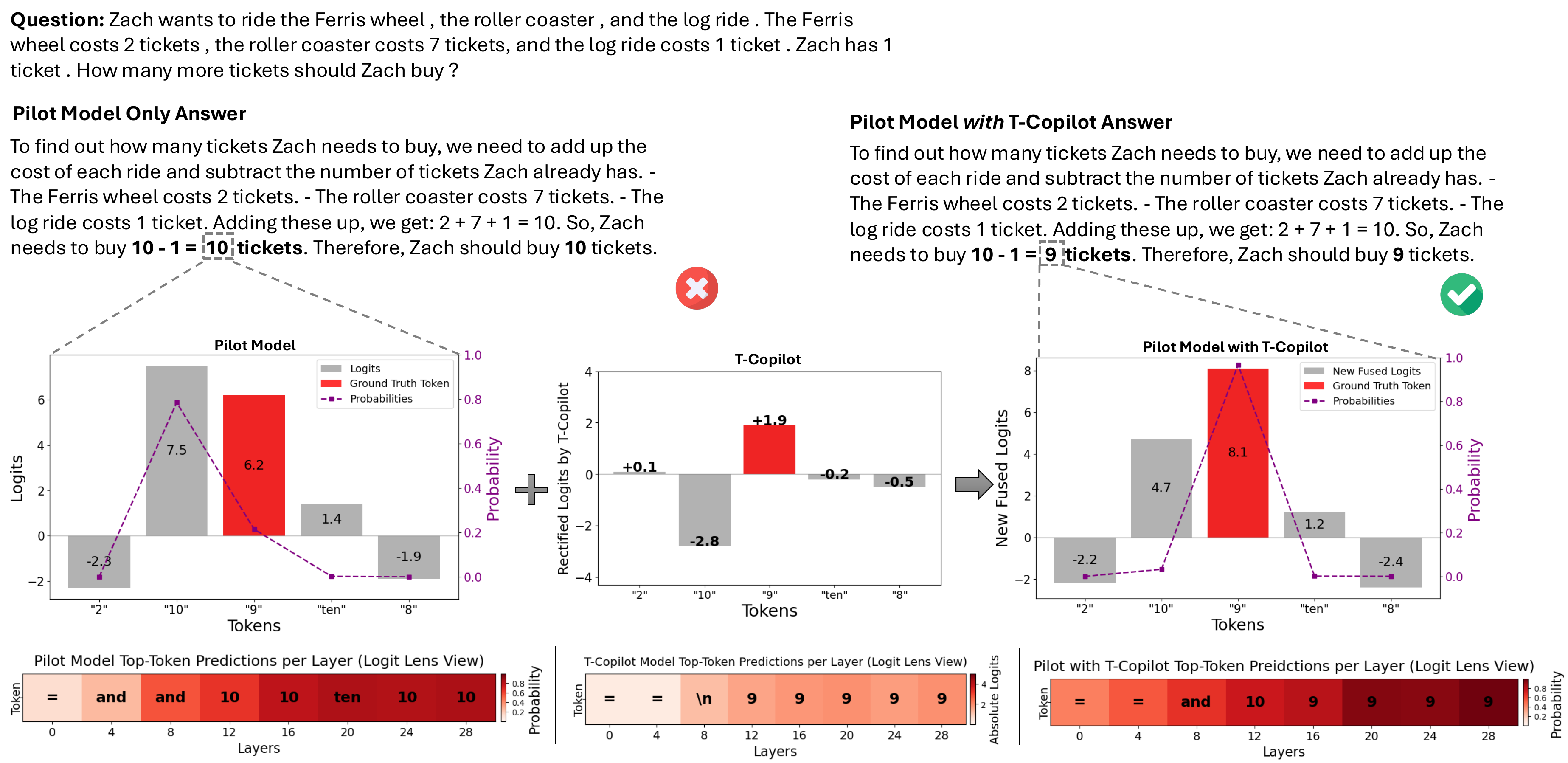}
    \caption{Example of Copilot’s Token-level Rectification on MAWPS.}
    \label{fig:error_case_1}
\end{figure}

\paragraph{Setups.} In our empirical analysis, we choose LLaMA-3.2-3B as the Pilot model and T-Copilot-1B as the Copilot model. The Copilot model's implementation details are the same as stated in Appendix \ref{appendix:model_configuration}. We evaluate on two reasoning tasks, including SIQA \cite{sap2019socialiqa} and MAWPS \cite{koncel2016mawps}. The dataset details are provided later in Appendix \ref{appendix:datasets}.

\paragraph{Example of Copilot’s Token-level Rectification.}
\label{appendix:error_pattern}
Figure \ref{fig:error_case_1} demonstrates another representative example of Copilot's token-level rectification on the factual error made by the Pilot model. The token ``10" is originally predicted wrong during the Pilot model mid-generation and is later corrected (token ``9") through the Copilot model's logits rectification. To visualize the process, we present three plots showing the top-5 tokens' output logits and probabilities in the current token prediction. Note that the Copilot not only increases the logits value on the groundtruth token but also decreases the logits value on the original Pilot model's falsely predicted token. We further apply the Logit Lens \cite{belrose2023eliciting}, a standard interpretability tool, to project hidden state embeddings from each intermediate layer onto the vocabulary space to show how the Copilot adjusts the Pilot model’s predictions on each state.

\section{Datasets}
\label{appendix:datasets}
\subsection{Commonsense Reasoning} \label{appendix:commensense}
For the commonsense reasoning tasks, we choose six open-ended multiple-choice QA tasks. The detailed description for each dataset is listed below:
\begin{itemize}[leftmargin=*]
    \item \textbf{PIQA} \cite{bisk2020piqa}: A dataset for physical commonsense reasoning, requiring models to choose the more plausible solution for everyday tasks.

    \item \textbf{WinoGrande (WinoG.)} \cite{sakaguchi2021winogrande}: A large-scale dataset for commonsense pronoun resolution, extending the Winograd Schema Challenge \cite{levesque2012winograd} with diverse and harder examples.

    \item \textbf{HellaSwag (HellaS.)} \cite{zellers2019hellaswag}: A benchmark testing commonsense reasoning in story completion by selecting the most plausible next sentence among adversarial choices.

    \item \textbf{BoolQ} \cite{clark2019boolq}: A question-answering dataset where models answer yes/no questions based on a given passage, requiring deep reading comprehension.

    \item \textbf{SIQA} \cite{sap2019socialiqa}: A dataset for reasoning about social and emotional situations by selecting the most appropriate response to everyday scenarios.

    \item \textbf{Openbook QA (OBQA)} \cite{mihaylov2018can}: A dataset that tests knowledge-based question answering by requiring models to combine common knowledge with reasoning over multiple facts.
\end{itemize}

In our commonsense reasoning experiments, we follow the experimental setup from \cite{hu2023llm} and fine-tune both our models and baseline models on the combined training dataset, Commonsense170K, which is constructed by sampling and integrating the training sets of the aforementioned commonsense reasoning datasets. Each dataset's individual test set is used for evaluation. Both fine-tuning and testing data instances utilize zero-shot input prompts.

\subsection{Arithmetic Reasoning}\label{appendix:arithme}
For arithmetic reasoning tasks, we evaluate our method on four open-ended math problem-solving datasets spanning multiple mathematical domains. The detailed description of each dataset is provided below:
\begin{itemize}[leftmargin=*]
    \item \textbf{AQuA} \cite{ling2017program}: A dataset of algebraic and arithmetic word problems presented in a multiple-choice format, requiring logical reasoning and numerical computation.

    \item \textbf{GSM8K} \cite{cobbe2021training}: A dataset of grade-school-level math word problems designed to evaluate step-by-step reasoning and arithmetic skills.

    \item \textbf{MAWPS} \cite{koncel2016mawps}: A dataset aggregating math word problems from various sources, focusing on problem diversity and automatic equation generation.

    \item \textbf{SWAMP} \cite{patel2021nlp}: A dataset that introduces systematic variations of simple arithmetic word problems to assess model robustness against linguistic perturbations
\end{itemize}
In our arithmetic reasoning experiments, we follow the experimental setup from \cite{hu2023llm} and fine-tune both our models and baseline model on the combined training dataset, Math10K. We also adopt the data preprocessing setup in \cite{wu2024reft} to avoid any potential training data leakage. Each aforementioned dataset's individual test set is used for evaluation. 
\textbf{Note that}, unlike commonsense reasoning, fine-tuning for arithmetic reasoning involves labels with zero-shot Chain-of-Thought (CoT) \cite{kojima2022large} prompts. Consequently, the training cutoff length is longer due to the increased token count and additional information contained in the prompts.

\subsection{Downstream tasks: Recommendation} \label{appendix:downstream}
For downstream application experiments, we utilize two sequential recommendation datasets, as LLM-based recommendation is a widely adopted task to evaluate language models' generation and decision-making capabilities. The detailed description for each dataset is listed below:
\begin{itemize}[leftmargin=*]
    \item \textbf{Beauty} \cite{he2016ups}: The Beauty dataset comprises user-item interaction data from the Amazon beauty product category. It includes 22,363 users and 12,101 items, with a total of 198,502 interactions. The dataset has a sparsity level of 99.93\%.

    \item \textbf{LastFM} \cite{rozemberczki2020characteristic}: The LastFM dataset contains 1,090 users and 3,646 items, with 52,551 interactions in total. The sparsity of the dataset is 98.68\%.
\end{itemize}

In our experiments, we use the training and testing datasets from \cite{xu2023openp5}. To ensure a fair comparison, we assign random numeric IDs to items and evaluate our method and baselines on sequential recommendation tasks. 

\textbf{Metrics.} For evaluation, we employ two commonly used metrics Hit@$K$ and NDCG@$K$ metrics with $K \in \{5,10,20,100\}$. We define each metric in detail below:
\begin{itemize}[leftmargin=*]
    \item \textbf{Hit Rate} measures the proportion of users for whom at least one relevant item appears within the top $K$ recommendations. 
    \begin{equation}
    \text{H}@K = \frac{1}{|U|} \sum_{u \in U} \mathbb{I} (\text{Rel}(u) \cap R_u^K \neq \emptyset)
    \end{equation}
    where \( U \) is the set of users, \( R_u^K \) is the top-K recommended items for user \( u \), \( \text{Rel}(u) \) is the set of relevant items for user \( u \), and \( \mathbb{I}(\cdot) \) is the indicator function which equals 1 if the condition is true, and 0 otherwise.

    \item \textbf{NDCG} evaluates both the relevance and position of items in the ranked list, assigning higher importance to relevant items that appear earlier, thereby reflecting the overall quality of the ranking system. 
    \begin{equation}
    \text{NDCG}@K = \frac{1}{|U|} \sum_{u \in U} \frac{\sum_{i=1}^{K} \frac{\text{rel}_{u,i}}{\log_2(i+1)}}{\sum_{i=1}^{|R_u^*|} \frac{\text{rel}_{u,i}}{\log_2(i+1)}}
    \end{equation}
    
    where \( \text{rel}_{u,i} \) is the relevance score of the item at position \( i \) in the ranked list for user \( u \), and \( R_u^* \) is the ideal ranking of relevant items for user \( u \).
\end{itemize}

\subsection{Fine-tuning Dataset Template} \label{appendix:finetune}

\begin{table}[!ht]
    \centering
    \small
    \caption{Examples of dataset templates used in \modelname.}
    \label{tab:dataset_template}
    \resizebox{\linewidth}{!}{%
    \begin{tabular}{l|p{\linewidth}}
    \toprule
    \textbf{Task Type} & \textbf{Template} \\
    \midrule
    \multirow{15}{*}{\textbf{Commonsense Reasoning}}
    & \textbf{Input:} Please choose the correct answer to fill in the blank to complete the given sentence:  
    "After being diagnosed with Parkinson's, he now was taking supplements and prescriptions, but the doctor warned the \_\_\_ would be ineffective."  
    Option1: supplements  
    Option2: prescriptions  
    Answer format: option1/option2      
    \newline \newline 
    \textbf{Answer:} The correct answer is option1. \\
    \cmidrule{2-2}
    &  \textbf{Input:} Please choose the correct ending to complete the given sentence: High jump: The boy lands on his back on to a red mat. The boy gets up from the mat. The boy: Ending1: starts doing spins. Ending2: celebrates by clapping and flexing both arms. Ending3: is dancing on the mat. Ending4: does jump jacks on his stick. Answer format: ending1/ending2/ending3/ending4
    \newline \newline
    \textbf{Answer:} ending2 \\
    \cmidrule{2-2}
    &  \textbf{Input:} Please answer the following question with true or false, question: is elder scrolls online the same as skyrim? Answer format: true/false.    
    \newline \newline
    \textbf{Answer:} false \\
    \midrule
    \multirow{29}{*}{\textbf{Arithmetic Reasoning}} & 
    \textbf{Input:} Please answer the following math question step by step:  
    Darnel sprinted 0.875 of a lap and then took a break by jogging 0.75 of a lap.  
    How much farther did Darnel sprint than jog?  
    \newline \newline 
    \textbf{Answer:}  
    Step 1: Find the total distance Darnel covered.  
    Total distance = distance sprinted + distance jogged  
    Total distance = 0.875 + 0.75 = 1.625 laps  
    Step 2: Find the distance Darnel sprinted.  
    Distance sprinted = 0.875 laps  
    Step 3: Find the distance Darnel jogged.  
    Distance jogged = 0.75 laps  
    Step 4: Find the difference between the distance Darnel sprinted and jogged.  
    Difference = Distance sprinted - Distance jogged  
    Difference = 0.875 - 0.75 = 0.125 laps  
    Therefore, Darnel sprinted 0.125 laps farther than jogged.  
    The answer in Arabic numerals is 0.125. \\
    \cmidrule{2-2}
    &  \textbf{Input:} If the probability that Stock A will increase in value during the next month is 0.56, and the probability that Stock B will increase in value during the next month is 0.74. What is the greatest value for the probability that neither of these two events will occur?
    \newline \newline
    \textbf{Answer:} The probability that stock A does not increase is 0.44, and the probability that stock B does not increase is 0.26. Now, how can the probability that both do not increase be more than individual probability of not increasing for each? So the probability that both do not increase can not be more than 0.26. Basically the probability that both do not increase is between 0 and 0.26.\\

     \cmidrule{2-2}
    &  \textbf{Input:} A typical tournament of tennis has 4 rounds. There are 8 games in the first round, 4 in the second round, 2 in the third round and 1 during the finals. If each game requires new tennis balls, and on average each game uses 5 cans of tennis balls, how many tennis balls in total are used at the end of the tournament if each can has 3 tennis balls?
    \newline \newline
    \textbf{Answer:} Step 1: Calculate the number of tennis balls used in each game. Each can has 3 tennis balls, so 5 cans have 3 x 5 = 15 tennis balls. Therefore, each game uses 15 tennis balls. Step 2: Calculate the total number of games in the tournament. The first round has 8 games, the second round has 4 games, the third round has 2 games, and the finals have 1 game. Therefore, the total number of games in the tournament is 8 + 4 + 2 + 1 = 15.Step 3: Calculate the total number of tennis balls used in the tournament. Each game uses 15 tennis balls, so 15 games use 15 x 15 = 225 tennis balls.Therefore, the total number of tennis balls used in the tournament is 225.0.\\
    
    \midrule
    \multirow{5}{*}{\textbf{Downstream Recommendation}} & 
    \textbf{Input:} Considering \{\textit{dataset}\}, user\_\{\textit{user\_id}\}  
    has interacted with \{\textit{dataset}\} items \{\textit{history}\}.  
    What is the next recommendation for the user?  
    \newline \newline 
    \textbf{Answer:} \{\textit{dataset}\} \{\textit{target}\}  
    \newline
    E.g. Beauty item\_1253 \\
    \bottomrule
    \end{tabular}
    }
\end{table}

In Table \ref{tab:dataset_template}, we provide examples of data instances for each task mentioned above during model fine-tuning. All experiments are conducted in the zero-shot setting to better facilitate model-wise evaluation using pass@1 accuracy (i.e., based on a single generation attempt).

\newpage
\section{Experiment Setups}
\label{appendix:experiment_setups}
\subsection{Hyperparameters and Training/Inference Details}
\label{appendix:Hyperparameters}
Tables \ref{tab:hyper_1}-\ref{tab:hyper_6} present our hyperparameter settings of each task for reproducibility. We perform hyperparameter tuning for both T-Copilot and baseline methods. Unless otherwise specified, both our method and baseline implementations use beam search decoding \cite{freitag2017beam} during inference. 
All experiments have been run three times with random seeds, reporting average accuracy. For FLAN-T5, LLaMA-3, and Qwen2.5 models, checkpoints are saved every 1,000 steps to track parameters and monitor training to ensure robustness and avoid overfitting.
\begin{table}[!ht]
    \centering
    \caption{Hyperparameter configuration of \modelname for LLaMA-3 and Qwen-2.5 series models on the \textbf{Commonsense Reasoning} Tasks.}
    \label{tab:hyper_1}
    \resizebox{0.62\linewidth}{!}{%
    \begin{tabular}{l|ccc|c}
    \toprule
    \multirow{2}{*}{\textbf{Hyperparameters}} & \multicolumn{3}{c|}{\textbf{Pilot Model}} & \textbf{Copilot Model}\\
    \cmidrule(lr){2-4} \cmidrule(lr){5-5} 
    & LLaMA-3.2-1B & LLaMA-3.2-3B & LLaMA-3.1-8B & T-Copilot (1B)\\
    \midrule
    $\lambda$ & \multicolumn{4}{c}{[0.1, 0.3, 0.5, 0.8, 1.0]}\\
    \midrule
    \rowcolor{gray!20}
   \multicolumn{5}{l}{\textit{\textbf{Fine-tuning Configurations}}} \\
    Epochs & 3 & 3 & 3 & 3 \\
    Batch Size & 16 & 16 & 16 & 16\\
    Micro Batch Size & 4 & 4 & 4 & 4\\
    Cut Off Length & 256 & 256 & 256 & 256\\
    Maximum Learning Rate & $3e^{-4}$ & $3e^{-4}$ & $3e^{-4}$ & $5e^{-4}$\\
    Learning Rate Scheduler & Cosine & Cosine & Cosine & Cosine \\
    Optimizer & AdamW & AdamW & AdamW & AdamW \\
    Warmup Steps & 200 & 200 & 200 & 200 \\
    Weight Decay & 0.00 & 0.00 & 0.00 & 0.00\\
    \midrule
    \rowcolor{gray!20}
    \multicolumn{5}{l}{\textit{\textbf{LoRA Configurations}}} \\
    Rank $r$ & 32 & 32 & 32 & 32 \\
    LoRA Alpha & 64 & 64 & 64 & 64 \\
    LoRA Dropout & 0.05 & 0.05 & 0.05 & 0.08\\
    \midrule
    \rowcolor{gray!20}
    \multicolumn{5}{l}{\textit{\textbf{Inference Configurations}}} \\
    Temperature & \multicolumn{4}{c}{0.1} \\
    Top p & \multicolumn{4}{c}{0.95}\\
    Top k & \multicolumn{4}{c}{40}\\
    Num Beams & \multicolumn{4}{c}{4}\\
    Maximum New Tokens & \multicolumn{4}{c}{64}\\
    \bottomrule
    \end{tabular}%
    }
\end{table}

\begin{table}[!ht]
    \centering
    \caption{Hyperparameter configuration of \modelname for LLaMA-3 and Qwen-2.5 series models on the \textbf{Arithemtic Reasoning} Tasks.}
    \label{tab:hyper_2}
    \resizebox{0.62\linewidth}{!}{%
    \begin{tabular}{l|ccc|c}
    \toprule
    \multirow{2}{*}{\textbf{Hyperparameters}} & \multicolumn{3}{c|}{\textbf{Pilot Model}} & \textbf{Copilot Model}\\
    \cmidrule(lr){2-4} \cmidrule(lr){5-5} 
    & LLaMA-3.2-1B & LLaMA-3.2-3B & LLaMA-3.1-8B & T-Copilot (1B)\\
    \midrule
    $\lambda$ & \multicolumn{4}{c}{[0.1, 0.3, 0.5, 0.8, 1.0]}\\
    \midrule
    \rowcolor{gray!20} \multicolumn{5}{l}{\textit{\textbf{Fine-tuning Configurations}}} \\
    Epochs & 3 & 3 & 3 & 3 \\
    Batch Size & 16 & 16 & 16 & 16\\
    Micro Batch Size & 4 & 4 & 4 & 4\\
    Cut Off Length & 256 & 256 & 256 & 256\\
    Maximum Learning Rate & $2e^{-4}$ & $2e^{-4}$ & $1e^{-4}$ & $3e^{-4}$\\
    Learning Rate Scheduler & Cosine & Cosine & Cosine & Cosine \\
    Optimizer & AdamW & AdamW & AdamW & AdamW \\
    Warmup Steps & 100 & 100 & 100 & 100 \\
    Weight Decay & 0.00 & 0.00 & 0.00 & 0.00\\
    \midrule
    \rowcolor{gray!20} \multicolumn{5}{l}{\textit{\textbf{LoRA Configurations}}} \\
    Rank $r$ & 32 & 32 & 32 & 32 \\
    LoRA Alpha & 64 & 64 & 64 & 64 \\
    LoRA Dropout & 0.05 & 0.05 & 0.05 & 0.08\\
    \midrule
    \rowcolor{gray!20} \multicolumn{5}{l}{\textit{\textbf{Inference Configurations}}} \\
    Temperature & \multicolumn{4}{c}{0.1} \\
    Top p & \multicolumn{4}{c}{0.95}\\
    Top k & \multicolumn{4}{c}{40}\\
    Num Beams & \multicolumn{4}{c}{4}\\
    Maximum New Tokens & \multicolumn{4}{c}{256}\\
    \bottomrule
    \end{tabular}%
    }
\end{table}

\clearpage

\begin{table}[!ht]
    \centering
    \caption{Hyperparameter configuration of \modelname for LLaMA-3.2-1B, LLaMA-3.2-3B, and LLaMA-3.1-8B on the \textbf{Downstream Recommendation} Tasks.}
    \label{tab:hyper_3}
    \resizebox{0.65\linewidth}{!}{%
    \begin{tabular}{l|ccc|c}
    \toprule
    \multirow{2}{*}{\textbf{Hyperparameters}} & \multicolumn{3}{c|}{\textbf{Pilot Model}} & \textbf{Copilot Model}\\
    \cmidrule(lr){2-4} \cmidrule(lr){5-5} 
    & LLaMA-3.2-1B & LLaMA-3.2-3B & LLaMA-3.1-8B & T-Copilot (1B)\\
    \midrule
    $\lambda$ & \multicolumn{4}{c}{[0.1, 0.3, 0.5, 0.8, 1.0]}\\
    \midrule
    \rowcolor{gray!20} \multicolumn{5}{l}{\textit{\textbf{Fine-tuning Configurations}}} \\
    Epochs & 3 & 3 & 3 & 3 \\
    Batch Size & 32 & 32 & 32 & 32\\
    Micro Batch Size & 1 & 1 & 1 & 1\\
    Cut Off Length & 256 & 256 & 256 & 256\\
    Maximum Learning Rate & $3e^{-4}$ & $3e^{-4}$ & $3e^{-4}$ & $5e^{-4}$\\
    Learning Rate Scheduler & Cosine & Cosine & Cosine & Cosine \\
    Optimizer & AdamW & AdamW & AdamW & AdamW \\
    Warmup Steps & 100 & 100 & 100 & 100 \\
    Weight Decay & 0.00 & 0.00 & 0.00 & 0.00\\
    \midrule
    \rowcolor{gray!20} \multicolumn{5}{l}{\textit{\textbf{LoRA Configurations}}} \\
    Rank $r$ & 16 & 16 & 16 & 16 \\
    LoRA Alpha & 16 & 16 & 16 & 16 \\
    LoRA Dropout & 0.05 & 0.05 & 0.05 & 0.08\\
    \midrule
    \rowcolor{gray!20} \multicolumn{5}{l}{\textit{\textbf{Inference Configurations}}} \\
    Temperature & \multicolumn{4}{c}{0.1} \\
    Top p & \multicolumn{4}{c}{0.95}\\
    Top k & \multicolumn{4}{c}{40}\\
    Num Beams & \multicolumn{4}{c}{4}\\
    Maximum New Tokens & \multicolumn{4}{c}{64}\\
    \bottomrule
    \end{tabular}%
    }
\end{table}

\begin{table}[!ht]
    \centering
    \caption{Hyperparameter configuration of \modelname for FLAN-T5-small/base/large on the \textbf{Commonsense Reasoning} Tasks.}
    \label{tab:hyper_4}
    \resizebox{0.65\linewidth}{!}{%
    \begin{tabular}{l|ccc|c}
    \toprule
    \multirow{2}{*}{\textbf{Hyperparameters}} & \multicolumn{3}{c|}{\textbf{Pilot Model}} & \textbf{Copilot Model}\\
    \cmidrule(lr){2-4} \cmidrule(lr){5-5} 
    & FLAN-T5-small & FLAN-T5-base & FLAN-T5-large & T-Copilot (small/base)\\
    \midrule
    $\lambda$ & \multicolumn{4}{c}{[0.1, 0.3, 0.5, 0.8, 1.0]}\\
    \midrule
    \rowcolor{gray!20} \multicolumn{5}{l}{\textit{\textbf{Fine-tuning Configurations}}} \\
    Epochs & 12 & 12 & 12 & 12 \\
    Batch Size & 16 & 16 & 16 & 16\\
    Micro Batch Size & 1 & 1 & 1 & 1\\
    Cut Off Length & 256 & 256 & 256 & 256\\
    Maximum Learning Rate & $1e^{-3}$ & $1e^{-3}$ & $1e^{-3}$ & $3e^{-3}$\\
    Learning Rate Scheduler & Cosine & Cosine & Cosine & Cosine \\
    Optimizer & AdamW & AdamW & AdamW & AdamW \\
    Warmup Ratio & 0.05 & 0.05 & 0.05 & 0.05 \\
    Weight Decay & 0.01 & 0.01 & 0.01 & 0.01\\
    Drop Out & 0.1 & 0.1 & 0.1 & 0.1 \\
    \midrule
    \rowcolor{gray!20} \multicolumn{5}{l}{\textit{\textbf{Inference Configurations}}} \\
    Temperature & \multicolumn{4}{c}{0.1} \\
    Top p & \multicolumn{4}{c}{0.95}\\
    Top k & \multicolumn{4}{c}{40}\\
    Num Beams & \multicolumn{4}{c}{4}\\
    Maximum New Tokens & \multicolumn{4}{c}{64}\\
    \bottomrule
    \end{tabular}%
    }
\end{table}

\clearpage
\begin{table}[!ht]
    \centering
    \caption{Hyperparameter configuration of \modelname for FLAN-T5-small/base/large on the \textbf{Arithmetic Reasoning} Tasks.}
    \label{tab:hyper_5}
    \resizebox{0.65\linewidth}{!}{%
    \begin{tabular}{l|ccc|c}
    \toprule
    \multirow{2}{*}{\textbf{Hyperparameters}} & \multicolumn{3}{c|}{\textbf{Pilot Model}} & \textbf{Copilot Model}\\
    \cmidrule(lr){2-4} \cmidrule(lr){5-5} 
    & FLAN-T5-small & FLAN-T5-base & FLAN-T5-large & T-Copilot (small/base)\\
    \midrule
    $\lambda$ & \multicolumn{4}{c}{[0.1, 0.3, 0.5, 0.8, 1.0]}\\
    \midrule
    \rowcolor{gray!20} \multicolumn{5}{l}{\textit{\textbf{Fine-tuning Configurations}}} \\
    Epochs & 12 & 12 & 12 & 12 \\
    Batch Size & 16 & 16 & 16 & 16\\
    Micro Batch Size & 1 & 1 & 1 & 1\\
    Cut Off Length & 256 & 256 & 256 & 256 \\
    Maximum Learning Rate & $1e^{-3}$ & $1e^{-3}$ & $1e^{-3}$ & $3e^{-3}$\\
    Learning Rate Scheduler & Cosine & Cosine & Cosine & Cosine \\
    Optimizer & AdamW & AdamW & AdamW & AdamW \\
    Warmup Ratio & 0.05 & 0.05 & 0.05 & 0.05 \\
    Weight Decay & 0.01 & 0.01 & 0.01 & 0.01\\
    Drop Out & 0.1 & 0.1 & 0.1 & 0.1 \\
    \midrule
    \rowcolor{gray!20} \multicolumn{5}{l}{\textit{\textbf{Inference Configurations}}} \\
    Temperature & \multicolumn{4}{c}{0.1} \\
    Top p & \multicolumn{4}{c}{0.95}\\
    Top k & \multicolumn{4}{c}{40}\\
    Num Beams & \multicolumn{4}{c}{4}\\
    Maximum New Tokens & \multicolumn{4}{c}{256}\\
    \bottomrule
    \end{tabular}%
    }
\end{table}

\begin{table}[!ht]
    \centering
    \caption{Hyperparameter configuration of \modelname for T5-small/base on the \textbf{Downstream Recommendation} Tasks.}
    \label{tab:hyper_6}
    \resizebox{0.6\linewidth}{!}{%
    \begin{tabular}{l|cc|c}
    \toprule
    \multirow{2}{*}{\textbf{Hyperparameters}} & \multicolumn{2}{c|}{\textbf{Pilot Model}} & \textbf{Copilot Model}\\
    \cmidrule(lr){2-4} 
    &T5-small & T5-base & T-Copilot (small/base)\\
    \midrule
    $\lambda$ & \multicolumn{3}{c}{[0.1, 0.3, 0.5, 0.8, 1.0]}\\
    \midrule
    \rowcolor{gray!20} \multicolumn{4}{l}{\textit{\textbf{Fine-tuning Configurations}}} \\
    Epochs & 20 & 20 & 20 \\
    Batch Size & 16 & 16 & 16\\
    Micro Batch Size & 1 & 1 & 1\\
    Cut Off Length & 256 & 256 & 256\\
    Maximum Learning Rate & $1e^{-3}$ & $1e^{-3}$ & $1e^{-3}$\\
    Learning Rate Scheduler & Cosine & Cosine & Cosine \\
    Optimizer & AdamW & AdamW & AdamW \\
    Warmup Ratio & 0.05 & 0.05 & 0.05 \\
    Weight Decay & 0.01 & 0.01 & 0.01 \\
    Drop Out & 0.1 & 0.1 & 0.1 \\
    \midrule
    \rowcolor{gray!20} \multicolumn{4}{l}{\textit{\textbf{Inference Configurations}}} \\
    Temperature & \multicolumn{3}{c}{0.1} \\
    Top p & \multicolumn{3}{c}{0.95}\\
    Top k & \multicolumn{3}{c}{40}\\
    Num Beams & \multicolumn{3}{c}{4}\\
    Maximum New Tokens & \multicolumn{3}{c}{64}\\
    \bottomrule
    \end{tabular}%
    }
\end{table}

\clearpage
\begin{table}[!ht]
    \centering
    \caption{Total and Trainable Parameter Statistics. We report the total trainable parameter count for encoder-decoder models. For other model types, we present the proportion of trainable parameters under LoRA fine-tuning relative to the total model size.}
    \label{tab:model_params}
    \resizebox{0.5\linewidth}{!}{%
    \begin{tabular}{llcc}
        \toprule
        \multirow{2}{*}{Type} & \multirow{2}{*}{Model} & \multirow{2}{*}{\shortstack{Size \\ (\textit{Total)}}} & \multirow{2}{*}{\shortstack{Params \\ (\textit{Trainable)}}} \\
        & & \\
        \midrule
        \multirow{13}{*}{T5/FLAN-T5}
        & T5-small
        & 61M & 61M \\
        & \textbf{\textit{+ T-Copilot-small}}
        & 92M & 92M \\
        \cmidrule(lr){2-4}
        & $\text{T5-small}_{12}$
        & 122M & 122M \\
        \cmidrule(lr){2-4}
        & T5-base
        & 223M & 223M \\
        & \textbf{\textit{+ T-Copilot-base}}
        & 349M & 349M \\
        \cmidrule(lr){2-4}
        & $\text{T5-base}_{24}$
        & 446M & 446M \\
        \cmidrule(lr){2-4}
        & FLAN-T5-small
        & 77M & 77M \\
        & \textbf{\textit{+ T-Copilot-small}}
        & 118M & 118M \\
        \cmidrule(lr){2-4}
        & FLAN-T5-base
        & 248M & 248M \\
        & \textbf{\textit{+ T-Copilot-base}}
        & 385M & 385M \\
        \cmidrule(lr){2-4}
        & FLAN-T5-large
        & 783M & 783M \\
        & \textbf{\textit{+ T-Copilot-small}}
        & 824M & 824M \\
        & \textbf{\textit{+ T-Copilot-base}}
        & 920M & 920M \\
        \midrule
        \multirow{2}{*}{LLaMA Pro}
        & Llama-Pro-8B & 8.9B & 0.832\%\\
        & Mistral-Pro-8B & 8.3B & 0.858\%\\
        \midrule 
        \multirow{2}{*}{MoE}
        & Mistral-7B & 7.3B & 0.721\% \\
        & Ministral-8B & 8.0B & 0.821\% \\
        \midrule 
        MergeKit &
        MergeKit-9B & 8.9B & 0.710\% \\
        \midrule
        Gemma & Gemma-2-9B & 9.2B & 0.813\% \\
        \midrule
        \multirow{7}{*}{LLaMA}
        & LLaMA-3.2-1B
        & 1.3B & 1.215\%\\
        & \textbf{\textit{+ T-Copilot-1B}}
        & 2.4B & 1.246\%\\
        \cmidrule(lr){2-4}
        & LLaMA-3.2-3B
        & 3.2B & 1.018\%\\
        & \textbf{\textit{+ T-Copilot-1B}}
        & 4.3B & 1.018\%\\
        \cmidrule(lr){2-4}
        & LLaMA-3.1-8B
        & 8.0B & 0.700\%\\
        & \textbf{\textit{+ T-Copilot-1B}}
        & 9.1B & 0.705\%\\
        \midrule
        \multirow{7}{*}{Qwen}
        & Qwen2.5-3B
        & 3.1B & 1.244\% \\
        & \textbf{\textit{+ T-Copilot-0.5B}}
        & 3.6B & 1.650\% \\
        & \textbf{\textit{+ T-Copilot-3B}}
        & 6.1B & 1.263\% \\
        \cmidrule(lr){2-4}
        & Qwen2.5-7B
        & 7.6B & 0.814\% \\
        & \textbf{\textit{+ T-Copilot-0.5B}}
        & 8.0B & 0.819\% \\
        & \textbf{\textit{+ T-Copilot-3B}}
        & 10.8B & 0.815\% \\
        \cmidrule(lr){2-4}
        & Qwen2.5-14B & 14.8B & 0.211\% \\ 
        \bottomrule
    \end{tabular}
    }
\end{table}

\subsection{T-Copilot Configurations and Implementations}
\label{appendix:model_configuration}
In our implementation, we integrate the \modelname learning framework into both encoder-decoder and decoder-only LLMs mentioned above. Specifically, we introduce a Copilot model as an auxiliary component to the original Transformer architecture. Below, we provide details on our models' implementation and notations.

\textbf{T5/FLAN-T5:}
\begin{itemize}[leftmargin=*]
    \item \textbf{T-Copilot-small:} 
    This refers to our Copilot model being initialized from the decoder module of a pre-trained T5-small or FLAN-T5-small model. Specifically, T-Copilot-small consists of 6 decoder layers with a hidden state dimension of 512, 8-headed attention, and a logit distribution dimensionality of 32,100. To adopt the model for our method, we exclude the conventional positional embedding mechanism and omit the softmax layer typically used for normalizing logits into probability distributions. Additionally, we add a linear layer to map the Copilot inputs from the logits distribution dimension to the decoder hidden state dimension.  If the Copilot's hidden state dimension differs from the Pilot model, an additional linear layer is added for dimension alignment.

    \item \textbf{T-Copilot-base:} This refers to our Copilot model being initialized from the decoder module of a pre-trained T5-base or FLAN-T5-base model. The overall model implementation is similar to T-Copilot-small. T-Copilot-base consists of 12 decoder blocks with a hidden state dimension of 768, 12-headed attention, and a logits distribution dimensionality of 32,100.
\end{itemize}

\textbf{LLaMA-3:}
\begin{itemize}[leftmargin=*]
    \item \textbf{T-Copilot-1B:} This refers to our Copilot model being initialized from the decoder module of a pre-trained LLaMA-3.2-1B model. T-Copilot-1B consists of 16 decoder blocks with a hidden state dimension of 2048, 32-headed attention, and a logits distribution dimensionality of 128,256. To adapt the model for our method, we exclude the conventional positional embedding mechanism and omit the softmax layer typically used for normalizing logits into probability distributions. To accelerate training, we incorporate the flash-attention mechanism. To enhance inference efficiency, we apply mean pooling to the concatenated input hidden states \( h_{t,i}(X_t; \theta^1_{t-1}) \) without compromising performance accuracy. We add a linear layer to map the Copilot inputs from the logits distribution dimension to the decoder hidden state dimension. If the Copilot's hidden state dimension differs from the Pilot model, an additional linear layer is added for dimension alignment.

    \item \textbf{T-Copilot-3B:} This refers to our Copilot model being initialized from the decoder module of a pre-trained LLaMA-3.2-3B. T-Copilot-1B consists of 28 decoder blocks with a hidden state dimension of 3072, 24-headed attention, and a logits distribution dimensionality of 128,256.
\end{itemize}

\textbf{Qwen2.5:} The model configurations for Qwen2.5 are similar to LLaMA-3 models as they share similar model implementation details. We provide the additional model configurations below:

\begin{itemize}[leftmargin=*]
    \item \textbf{T-Copilot-0.5B:} This refers to our Copilot model being initialized from the decoder module of a pre-trained Qwen2.5-0.5B. T-Copilot-0.5B consists of 24 decoder blocks with a hidden state dimension of 896, 14-headed attention, and a logits distribution dimensionality of 151,936.

    \item \textbf{T-Copilot-3B:} This refers to our Copilot model being initialized from the decoder module of a pre-trained Qwen2.5-3B. T-Copilot-3B consists of 36 decoder blocks with a hidden state dimension of 2048, 16-headed attention, and a logits distribution dimensionality of 151,936.
\end{itemize}

\textbf{Notation.} In our experiments, we represent our methods using the original model name ``+" the Copilot model. For example, FLAN-T5-small+T-Copilot-small denotes the integration of FLAN-T5-small with T-Copilot-small, and LLaMA-3.1-8B+T-Copilot-1B indicates the incorporation of LLaMA-3.1-8B with T-Copilot-1B.

\subsection{Baseline Details}
\label{appendix:baseline_detail}
\textbf{Frontier Models.}
Below, we detail the specific model versions of the backbone and baseline models in our experiments.

\textit{(i) Encoder-Decoder Models:} We use T5 and FLAN-T5 \cite{raffel2020exploring} with different sizes as our backbone and baseline models for the encoder-decoder Transformer architecture: \texttt{T5-small, T5-base, T5-large} and \texttt{FLAN-T5-small, FLAN-T5-base, FLAN-T5-large}.

\textit{(ii) Decoder-Only Models:} For the decoder-only models, we utilize the LLaMA-3 family \cite{dubey2024llama} as our backbone and baseline models. Our experiments include \texttt{LLaMA-3.2-1B, LLaMA-3.2-3B, LLaMA-3.1-8B}, and \texttt{LLaMA-2-13B}.

\textit{(iii) MoE Models:} For the Mixture-of-Expert based models, we use \texttt{Mistral-7B} with version \texttt{Mistral-7B-v0.3} and \texttt{Ministral-8B} with version \texttt{Ministral-8B-Instruct-2410}.

\textbf{Layer/Adapter Expansion Models.} In our experiments, we also compare against baseline methods that utilize layer and adapter expansion approaches. Below, we provide the model configurations and implementation details for these baselines.

\textit{(i) LLaMA Pro \cite{wu2024llama}:}
\texttt{LLaMA-Pro-8B} incorporates a content-addressable working memory module to store and retrieve task-relevant information. In our implementation, we initialized with the LLaMA-3.1-8B base model and expanded the number of blocks from 32 to 40 using an interleaved approach.
\texttt{Mistra-Pro-8B} is an enhanced version of the original Mistral model \cite{jiang2023mistral}, augmented with additional Transformer blocks. The model excels in integrating general language understanding with domain-specific knowledge and follows the same methodology as LLaMA-Pro-8B for block expansion. Following \cite{wu2024llama}, we use the version of Mistral-Pro-8B-v0.1.

\textit{(ii) MergeKit \cite{goddard2024arcee}:} MergeKit is an open-source toolkit designed for efficiently merging LLM checkpoints to combine their strengths without additional training. In our experiments, we train and apply one MergeKit model named \texttt{MergeKit-9B}. \texttt{MergeKit-9B} is initialized from LLaMa-3.1-8B and replicates additional layers with post-merge healing. The model is merged using the Passthrough method. In our experiments, we first compare the model with the original LLaMA-3.1-8B to ensure that the merged model does not lead to performance degradation.

\textit{(iii) TIES \cite{yadav2024ties}:} \texttt{$\text{T5-small}_{12}$} and \texttt{$\text{T5-base}_{24}$} are T5 type models merged using the TIES method. $\text{T5-small}_{12}$ merges two T5-small models and extends the original T5-small to 12 encoder and decoder layers. And $\text{T5-base}_{24}$ merges two T5-base models and extends the original T5-base to 24 encoder and decoder layers by duplicating existing layers. 

\textbf{Model Parameters.} In table \ref{tab:model_params}, we provided the detailed model sizes and trainable parameters for both \modelname and baseline models.

\section{Additional Experiments}
\label{appendix:additional_experiments}

\subsection{Full Table Report on Baseline Comparison}
\label{appendix:full_table_report_baseline}
\begin{table*}[!ht]
\captionsetup{font=footnotesize}
    \centering
    \caption{Full performance comparison (\%) with frontier baselines \textbf{under matched‐parameter scales}. Results are averaged over 3 independent runs.}
    \label{tab:generation_res_2_full}
    \small
    \resizebox{\textwidth}{!}{
    \begin{tabular}{@{}l l c c c c c c c c c c c c c@{}}
        \toprule
        \multirow{2}{*}{Model} & \multirow{2}{*}{Params} & \multicolumn{7}{c}{Commonsense Reasoning (Acc. $\uparrow$)} & \multicolumn{5}{c}{Arithmetic Reasoning (Acc. $\uparrow$)} \\
        \cmidrule(l){3-9} \cmidrule(l){10-14}
        & & PIQA & WinoG. & HellaS. & BoolQ & SIQA & OBQA & Avg. & AQuA & GSM8K & MAWPS & SVAMP & Avg. \\
        \midrule[0.35pt] \midrule[0.35pt]
        \multicolumn{14}{@{}c}{\textit{$\leq$8B-level Frontier LLMs }}\\
        \midrule
        Mistral-7B & 7B & 83.0 & 75.3 & 81.3 & 65.4 & 73.1 & 74.5 & 75.4 & 28.9 & 50.2 & 85.3 & 57.4 & 55.5 \\
        LLaMA-Pro-8B & 8B & 88.4 & 81.4 & 86.9 & 73.9 & 76.1 & 77.8 & 80.8 & 38.2 & 57.2 & 92.5 & 63.5 & 62.9 \\
        LLaMA-3.1-8B & 8B & 85.4 & 84.3 & 90.9 & 69.6 & 79.9 & 82.6 & 82.1 & 37.3 & 63.5 & 89.1 & 73.6 & 65.9 \\
        Ministral-8B & 8B & 85.7 & 84.1 & 91.3 & 70.3 & 77.5 & 81.3 & 81.7 & 37.4 & 62.9 & 90.2 & 73.2 & 65.9 \\

        \midrule 
        \rowcolor{gray!14} Qwen2.5-3B + T-Copilot-0.5B & 3.5B & 85.4 & 79.1 & 91.3 & 66.8 & 78.1 & 86.0 & 81.1 & 57.3 & 74.2 & 91.8 & 82.8 & 76.5 \\
        \rowcolor{gray!14} LLaMA-3.2-3B + T-Copilot-3B & 6B & 85.6 & 83.7 & 91.3 & 72.8 & 79.2 & 81.3 & 82.3 & 40.1 & 63.1 & 91.2 & 71.4 & 66.5 \\
        \rowcolor{gray!14} Qwen2.5-3B + T-Copilot-3B & 6B & 87.8 & 81.7 & 94.0 & 68.7 & 79.9 & 89.4 & 83.6 & 59.4 & 76.8 & 92.6 & 83.5 & 78.1 \\
        \rowcolor{gray!14} Qwen2.5-7B + T-Copilot-0.5B & 7.5B & 89.3 & 85.3 & 93.5 & 73.6 & 80.0 & 92.1 & 85.6 & 61.4 & 78.2 & 93.0 & 86.5 & 79.8\\
        
        \midrule[0.35pt]  \midrule[0.35pt]

        \multicolumn{14}{@{}c}{\textit{$>$8B-level Frontier LLMs }}\\
        \midrule 
        Gemma-2-9B & 9B & 81.4 & 82.8 & 93.5 & 70.2 & 79.5 & 86.1 & 82.3 & 40.1 & 64.3 & 82.7 & 75.0 & 65.5\\
        MergeKit-9B & 9B & 86.1 & 84.7 & 91.1 & 71.1 & 79.3 & 80.2 & 82.1 & 37.0 & 65.2 & 90.3 & 75.2 & 66.9 \\

        Qwen2.5-14B & 14B & 91.8 & 85.6 & 94.3 & 75.2 & 84.5 & 93.1 & 87.4 & 63.5 & 79.5 & 92.4 & 87.9 & 80.8 \\

        \midrule

        \rowcolor{gray!14} LLaMA-3.1-8B + T-Copilot-1B & 9B & 86.2 & 86.8 & 93.5 & 71.8 & 82.7 & 83.2 & 84.0 & 38.9 & 66.1 & 90.8 & 75.4 & 67.8 \\
        \rowcolor{gray!14} Qwen2.5-7B + T-Copilot-3B & 10B & 92.5 & 87.2 & 95.3 & 74.8 & 84.3 & 94.9 & 88.2 & 64.2 & 79.7 & 94.8 & 88.1 & 81.7 \\

        \bottomrule
    \end{tabular}
    }
\end{table*}

Table \ref{tab:generation_res_2_full} shows the full comparison results of T-Copilot against baseline models and methods with matched and larger parameter scales. Notably, under the same model architectures and with less pre-trained knowledge, LLaMA-3.2-3B+T-Copilot-3B outperforms LLaMA-3.1-8B with 2B fewer parameters, Qwen2.5-7B+T-Copilot-3B outperforms Qwen2.5-14B with 4B fewer parameters, and Qwen2.5-3B+T-Copilot-3B outperforms Qwen2.5-7B with 1B fewer parameters. Our method also outperforms other layer/adapter expansion baselines. These results underscore the parameter efficiency and architectural strength of our learning framework.

\subsection{Downstream Recommendation Evaluation}
\label{appendix:rec_full_res}
\vspace{-10pt}
\begin{table*}[!ht]
\centering
\caption{Performance comparison on \textbf{Beauty}. All methods are evaluated using both Hit Rates (H@K) and Normalized Discounted Cumulative Gain (N@K). The performance gains are also reported relative to respective backbone methods.}
\label{tab:recommendation_res_beauty}
\resizebox{\textwidth}{!}{%
\begin{tabular}{l|llllllll}
\toprule
\multirow{2}{*}{\makecell[c]{Models}} & \multicolumn{8}{c}{Beauty} \\
\cmidrule(l){2-9}
 & H@5 & H@10 & H@20 & H@100 & N@5 & N@10 & N@20 & N@100 \\
\midrule \midrule 
$\text{T5-small}_{12}$ & 1.9 & 3.2 & 5.3 & 15.4 & 1.3 & 1.8 & 3.9 & 6.2 \\
\midrule
T5-small & 1.7 & 2.9 & 5.4 & 14.6 & 1.0 & 1.4 & 3.5 & 5.7 \\
\textbf{+ T-Copilot-small} & 2.4\footnotesize{ (+0.7)} & 3.4\footnotesize{ (+0.5)} & 6.2\footnotesize{ (+0.8)} & 17.8\footnotesize{ (+3.2)} & 1.6\footnotesize{ (+0.6)} & 2.1\footnotesize{ (+0.7)} & 4.5\footnotesize{ (+1.0)} & 6.4\footnotesize{ (+0.7)} \\
\midrule \midrule
$\text{T5-base}_{24}$ & 2.6 & 4.6 & 7.5 & 18.6 & 2.3 & 2.9 & 4.7 & 6.8 \\
\midrule 
T5-base & 2.3 & 3.3 & 6.2 & 17.4 & 2.1 & 2.6 & 4.5 & 6.2 \\
\textbf{+ T-Copilot-base} & 3.2\footnotesize{ (+0.9)} & 4.4\footnotesize{ (+1.1)} & 8.2\footnotesize{ (+2.0)} & 19.8\footnotesize{ (+2.4)} & 2.7\footnotesize{ (+0.6)} & 3.3\footnotesize{ (+0.7)} & 5.2\footnotesize{ (+0.7)} & 7.2\footnotesize{ (+1.0)} \\
\midrule \midrule
LLaMA-3.2-1B & 5.2 & 7.4 & 10.0 & 18.8 & 3.8 & 4.4 & 5.1 & 6.7 \\
\textbf{+ T-Copilot-1B} & 6.1\footnotesize{ (+0.9)} & 8.1\footnotesize{ (+0.7)} & 12.5\footnotesize{ (+2.5)} & 24.6\footnotesize{ (+5.8)} & 4.3\footnotesize{ (+0.5)} & 5.1\footnotesize{ (+0.7)} & 5.8\footnotesize{ (+0.7)} & 7.4\footnotesize{ (+0.7)} \\
\midrule \midrule 
LLaMA-3.2-3B & 5.1 & 7.6 & 10.8 & 22.1 & 3.6 & 4.5 & 5.3 & 7.2 \\
\textbf{+ T-Copilot-1B} & 6.7\footnotesize{ (+1.6)} & 8.6\footnotesize{ (+1.0)} & 13.2\footnotesize{ (+2.4)} & 25.6\footnotesize{ (+3.5)} & 4.3\footnotesize{ (+0.7)} & 5.6\footnotesize{ (+1.1)} & 5.9\footnotesize{ (+0.6)} & 7.8\footnotesize{ (+0.6)} \\
\midrule \midrule 
LLaMA-3.1-8B & 5.8 & 8.3 & 11.1 & 21.5 & 4.1 & 4.9 & 5.6 & 7.5 \\
\textbf{+ T-Copilot-1B} & 7.1\footnotesize{ (+1.3)} & 9.2\footnotesize{ (+0.9)} & 13.5\footnotesize{ (+2.4)} & 26.4\footnotesize{ (+4.9)} & 4.7\footnotesize{ (+0.6)} & 6.2\footnotesize{ (+1.3)} & 6.4\footnotesize{ (+0.8)} & 8.1\footnotesize{ (+0.6)} \\
\bottomrule
\end{tabular}%
}
\end{table*}

\vspace{2em}

\begin{table*}[!ht]
\centering
\caption{Performance comparison on \textbf{LastFM}. All methods are evaluated using both Hit Rates (H@K) and Normalized Discounted Cumulative Gain (N@K). The performance gains are also reported relative to respective backbone methods.}
\label{tab:recommendation_res_lastfm}
\resizebox{\textwidth}{!}{%
\begin{tabular}{l|llllllll}
\toprule
\multirow{2}{*}{\makecell[c]{Models}} & \multicolumn{8}{c}{LastFM} \\
\cmidrule(l){2-9}
 & H@5 & H@10 & H@20 & H@100 & N@5 & N@10 & N@20 & N@100 \\
\midrule \midrule 
$\text{T5-small}_{12}$ & 2.5 & 3.8 & 4.9 & 12.4 & 1.8 & 2.2 & 2.8 & 3.9 \\
\midrule
T5-small & 2.1 & 3.7 & 4.2 & 11.0 & 1.6 & 2.0 & 2.5 & 3.2 \\
\textbf{+ T-Copilot-small} & 3.2\footnotesize{ (+1.1)} & 4.4\footnotesize{ (+0.7)} & 5.7\footnotesize{ (+1.5)} & 15.3\footnotesize{ (+4.3)} & 1.9\footnotesize{ (+0.3)} & 3.2\footnotesize{ (+1.2)} & 3.8\footnotesize{ (+1.3)} & 4.0\footnotesize{ (+0.8)} \\
\midrule \midrule
$\text{T5-base}_{24}$ & 3.8 & 4.6 & 7.1 & 17.5 & 2.0 & 3.8 & 3.3 & 4.7 \\
\midrule 
T5-base & 2.7 & 4.2 & 5.3 & 14.9 & 1.9 & 2.4 & 2.9 & 3.4 \\
\textbf{+ T-Copilot-base} & 4.2\footnotesize{ (+1.5)} & 5.1\footnotesize{ (+0.9)} & 8.1\footnotesize{ (+2.8)} & 19.4\footnotesize{ (+4.5)} & 2.3\footnotesize{ (+0.4)} & 3.5\footnotesize{ (+1.1)} & 4.2\footnotesize{ (+1.3)} & 5.2\footnotesize{ (+1.8)} \\
\midrule \midrule
LLaMA-3.2-1B & 5.0 & 5.7 & 9.1 & 21.9 & 2.4 & 3.0 & 3.9 & 6.2 \\
\textbf{+ T-Copilot-1B} & 6.4\footnotesize{ (+1.4)} & 6.8\footnotesize{ (+1.1)} & 11.2\footnotesize{ (+2.1)} & 24.7\footnotesize{ (+2.8)} & 2.9\footnotesize{ (+0.5)} & 3.5\footnotesize{ (+0.5)} & 4.3\footnotesize{ (+0.4)} & 6.7\footnotesize{ (+0.5)} \\
\midrule \midrule 
LLaMA-3.2-3B & 6.1 & 6.4 & 9.2 & 23.9 & 2.6 & 3.5 & 4.2 & 6.8 \\
\textbf{+ T-Copilot-1B} & 6.8\footnotesize{ (+0.7)} & 7.4\footnotesize{ (+1.0)} & 12.1\footnotesize{ (+2.9)} & 25.1\footnotesize{ (+1.2)} & 3.1\footnotesize{ (+0.5)} & 4.2\footnotesize{ (+0.7)} & 5.3\footnotesize{ (+1.1)} & 7.5\footnotesize{ (+0.7)} \\
\midrule \midrule 
LLaMA-3.1-8B & 4.7 & 7.3 & 10.3 & 25.6 & 3.1 & 3.7 & 4.7 & 7.0 \\
\textbf{+ T-Copilot-1B} & 6.9\footnotesize{ (+2.2)} & 8.6\footnotesize{ (+1.3)} & 12.7\footnotesize{ (+2.4)} & 28.0\footnotesize{ (+2.4)} & 3.9\footnotesize{ (+0.8)} & 4.8\footnotesize{ (+1.1)} & 5.4\footnotesize{ (+0.7)} & 7.9\footnotesize{ (+0.9)} \\
\bottomrule
\end{tabular}%
}
\end{table*}

In Table~\ref{tab:recommendation_res_beauty} and Table~\ref{tab:recommendation_res_lastfm}, we report the results of T-Copilot on two downstream recommendation datasets: Beauty and LastFM. We choose T5 and LLaMA-3 series models as the backbone Pilot models. Overall, T-Copilot improves the Pilot models by an average of 16.6\% across all evaluation metrics on the two datasets. Furthermore, compared to other baselines, incorporating T-Copilot achieves 16.7\% and 8.6\% higher performance than \texttt{$\text{T5-small}_{12}$} and \texttt{$\text{T5-base}_{24}$}, respectively, on Beauty and LastFM, while using 30M and 126M fewer parameters. 
These results demonstrate that the error-correction capabilities of T-Copilot are not confined to reasoning tasks but also generalize effectively to other application domains, such as recommendation, where precise LLM decision-making is critical for downstream utility.

\subsection{Efficiency Analysis on \modelname}
\label{appendix:full_effeciency}
\begin{table}[!ht]
    \centering
    \caption{Efficiency Comparison on Inference Latency. We report the total response time (s) per instance across six commonsense reasoning datasets, along with the average result.}
    \label{tab:latency}
    \vspace{3pt}
    \resizebox{0.85\linewidth}{!}{%
    \begin{tabular}{lccccccc}
    \toprule
    \textbf{Inference Latency ($\downarrow$)} & PIQA & WinoG. &  HellaS.  & BoolQ  & SIQA &  OBQA & Avg.\\ \hline
    LLaMA-3.2-1B & 0.33 & 0.36 & 0.27 & 0.23 & 0.27 & 0.24 & 0.28 \\ 
    \textbf{+ T-Copilot-1B} & 0.36 & 0.39 & 0.28 & 0.26 & 0.29 & 0.25 & 0.31 \\ \midrule
    LLaMA-3.2-3B & 0.46 & 0.45 & 0.47 & 0.46 & 0.46 & 0.55 & 0.48 \\ 
    \textbf{+ T-Copilot-1B} & 0.48 & 0.47 & 0.49 & 0.46 & 0.48 & 0.56 & 0.49 \\  \midrule
    LLaMA-3.1-8B & 0.52 & 0.52 & 0.51 & 0.49 & 0.49 & 0.62 & 0.53 \\ 
    \textbf{+ T-Copilot-1B} & 0.52 & 0.53 & 0.53 & 0.49 & 0.50 & 0.63 & 0.53\\  \midrule
    LLaMA-Pro-8B & 0.83 & 0.75 & 0.82 & 0.76 & 0.75 & 0.73 & 0.77 \\ 
    MergeKit-9B & 0.64 & 0.64 & 0.57 & 0.54 & 0.63 & 0.72 & 0.62\\ 
    \bottomrule
    \end{tabular}
    }
\end{table}

Table~\ref{tab:latency} presents the inference latency evaluation across six reasoning datasets. Our learning framework achieves lower latency than baseline models with comparable parameter scales. Specifically, LLaMA-3.1-8B+T-Copilot-1B consistently achieves 22.9\% lower inference latency, 3\% higher training throughput, and 57\% higher tokens-per-second (TPS) on average compared to methods such as LLaMA-Pro-8B and MergeKit-9B. Furthermore, we observe that incorporating T-Copilot increases the inference latency by less than 2\% relative to the original Pilot models, while yielding significant performance gains.

\subsection{Ablations and Analyses on \modelname}
\label{appendix:additional_ablation}

In this section, we perform multiple ablation studies to evaluate the influence of key hyperparameters and alternative method design on the T-Copilot's overall performance. 

\paragraph{Model Design of T-Copilot.} Table~\ref{tab:ablation_model_design} compares T-Copilot-1B with a variant that excludes learning from the Pilot model’s intermediate fine-tuning stages. The superior performance of T-Copilot highlights the advantage of our joint training paradigm, where the Mistake Log is continuously updated throughout the Pilot’s training trajectory and enables the Copilot to effectively leverage intermediate-stage information.

\begin{table*}[!ht]
    \centering
    \begin{minipage}[t]{0.49\textwidth}
        \centering
        \caption{Ablation study on model design. We denote \textit{Latest} as the variant where the 1B Copilot is trained using only the latest Pilot checkpoint.}
        \label{tab:ablation_model_design}
        \small
        \resizebox{\textwidth}{!}{
        \begin{tabular}{ll*{4}cc}
            \toprule
            Pilot & Copilot & AQuA & GSM8K & MAWPS & SVAMP & Avg.\\
            \midrule
            \multirow{2}{*}{LLaMA-3.2-1B}
            & Latest & 27.5 & 30.1 & 79.4 & 49.6 & 46.7  \\
            & \textbf{T-Copilot} & \textbf{28.3} & \textbf{32.2} &\textbf{81.5} & \textbf{51.6} & \textbf{48.4} \\
            \midrule
            \multirow{2}{*}{LLaMA-3.2-3B} 
            & Latest & 34.6 & 57.1 & 87.5 & 65.2 & 61.1 \\
            & \textbf{T-Copilot} & \textbf{36.6} & \textbf{58.2} & \textbf{89.1} & \textbf{68.7} & \textbf{63.2} \\
            \midrule
            \multirow{2}{*}{LLaMA-3.1-8B}
            & Latest & 37.6 & 64.6 & 90.0 & 73.9 & 66.5 \\
            & \textbf{T-Copilot} & \textbf{38.9} & \textbf{66.1} & \textbf{90.8} & \textbf{75.4} & \textbf{67.8} \\
            \bottomrule
        \end{tabular}
        }
    \end{minipage}
    \hfill
    \begin{minipage}[t]{0.46\textwidth}
     \centering
    \small
    \caption{Ablation study on $\lambda$. We use the T-Copilot-1B on LLaMA-3 series models.}
    \label{tab:ablation_lambda}
    \resizebox{0.8\linewidth}{!}{%
    \begin{tabular}{l|cccccc}
    \toprule
    \multirow{2}{*}{$\lambda$} & \multicolumn{3}{c}{HellaSwag} & \multicolumn{3}{c}{GSM8K}\\
    \cmidrule(lr){2-4} \cmidrule(lr){5-7}
    & 1B & 3B & 8B & 1B & 3B & 8B \\
    \midrule
    0.3 & 62.0 & 90.6 & 90.9 & 29.8 & 56.8 & 64.4 \\ 
    0.5 & 62.8 & 90.9 & 91.5 & 30.4 & 57.6 & 65.9 \\ 
    0.8 & 63.1 & \textbf{91.2} & 92.4 & 31.8 & 58.1 & 65.7\\ 
    1.0 & \textbf{63.3} & 91.1 & \textbf{93.5} & \textbf{32.2} & \textbf{58.2} & \textbf{66.1} \\
    \bottomrule
    \end{tabular}
    }
    \end{minipage}
\end{table*}

\textbf{Design Variants of the Decoder-Only Copilot.} 
\label{decoder_copilot_design}
To validate the efficacy of our proposed decoder-only Copilot design, we explore several alternative architectural variants and empirically compare their impact on the model’s final performance. Specifically, we examine different insertion patterns for the Copilot's new attention mechanism, i.e., the input and hidden states representations from the Pilot model recorded in the Mistake Log. We experiment with various design patterns and modify the Decoder-only Copilot model accordingly. The design options are listed below:
\begin{itemize}[leftmargin=*]
    \item \textbf{Pattern 1 (Ours):} Collect the hidden states across all Pilot model layers $L^P$ and insert them as key-value (KV) inputs for the \textbf{even-numbered} layers of the Copilot model. 
    \item \textbf{Pattern 2:} Collect the hidden states across all Pilot model layers $L^P$ and insert them as KV inputs for \textbf{each} layer of the Copilot model. 
    This setup examines whether integrating hidden states into all layers of the Copilot model improves performance by leveraging more entry points for processing the pilot model's hidden states information.
    
    \item \textbf{Pattern 3:} Collect only the \textbf{first half} ($L^P / 2$) layers of the Pilot model's hidden states and insert them as key-value (KV) inputs for the Copilot model. Combined with Pattern 4, this setup investigates where the Pilot model makes more mistakes during the learning trajectory.
    
    \item \textbf{Pattern 4:} In contrast to Pattern 3, Pattern 4 collects only the \textbf{second half} ($L_1$ layers) of the Pilot model's hidden states and inserts them as KV inputs for the Copilot model.    
\end{itemize}

\begin{table}[!ht]
\centering
\caption{Empirical comparison of different design patterns of the Decoder-only Copilot model. We evaluate the LLaMA-3.2-1B Pilot model and T-Copilot-1B. We report the average accuracy on three independent runs. The highest accuracy for each dataset is highlighted in bold.}
\label{tab:ablation_pattern}
\resizebox{0.85\linewidth}{!}{%
\begin{tabular}{lcccccc}
\toprule
\textbf{Input Patterns} & PIQA & HellaSwag & BoolQ & AQuA & GSM8K & SWAMP\\ 
\midrule 
Pattern 1 & \textbf{80.2} & \textbf{63.3} & \textbf{65.5} & \textbf{28.3} & \textbf{32.2} & \textbf{51.6} \\
Pattern 2 & 78.4 & 61.2 & 62.8 & 27.1 & 27.9 & 49.3 \\
Pattern 3 & 75.7 & 60.7 & 63.6 & 28.1 & 30.4 & 49.8 \\
Pattern 4 & 79.3 & 63.1 & 63.6 & 27.2 & 31.8 & 50.4 \\
\bottomrule
\end{tabular}
}
\end{table}

We follow the same experiment setups as stated in Section \ref{sec:exp}. Table \ref{tab:ablation_pattern} compares all 4 patterns on three commonsense reasoning and three arithmetic reasoning tasks. 
The results of Pattern 2 indicate that without the self-attention mechanism to capture dependencies in the Copilot model's generated outputs, the Copilot model struggles to effectively leverage the additional hidden state information during fine-tuning and inference.
Additionally, the results comparing Pattern 3 and Pattern 4 do not reveal a clear performance trend. This suggests that the Pilot model makes mistakes at different layers depending on the assigned task. Therefore, the Mistake Log $M_T$ should capture all hidden states from the Pilot model to ensure that no relevant error-related information is omitted during the Copilot model's learning process. Based on this empirical analysis, we demonstrate the effectiveness of Pattern 1 for our Copilot model design.

\textbf{Choice of $\lambda$.} In theorem \ref{theo:main_theroem}, we theoretically provide a bound on the range of $\lambda$ with $0<\lambda<\lambda_0$. Here, we empirically study the effect of different $\lambda$ configurations. The results in Table~\ref{tab:ablation_lambda} show that performance generally improves with larger $\lambda$ values in the range $[0, 1]$. The optimal value is observed around $\lambda = 1.0$. The results demonstrate that higher $\lambda$ amplifies the effect of T-Copilot, which aligns with our Copilot model design.

\begin{table}[!ht]
\centering
\vspace{-5pt}
\begin{minipage}[t]{0.48\linewidth}
\centering
\caption{Comparison of pooling methods for LLaMA-3.1-8B + T-Copilot-1B.}
\label{tab:ablation_pooling_methods}
\resizebox{0.85\linewidth}{!}{%
\begin{tabular}{lcc}
\toprule
\textbf{Pooling Methods} & \textbf{GSM8K} & \textbf{AQuA} \\
\midrule
Max Pooling  & 63.2 & 37.5 \\
Sum Pooling  & 65.8 & 38.3 \\
\textbf{Mean Pooling} & \textbf{66.1} & \textbf{38.9} \\
\bottomrule
\end{tabular}
}
\end{minipage}
\hspace{0.04\linewidth}
\begin{minipage}[t]{0.46\linewidth}
\centering
\caption{Comparison of loss functions for Qwen2.5-7B + T-Copilot-3B.}
\label{tab:ablation_loss_type}
\resizebox{0.8\linewidth}{!}{%
\begin{tabular}{lcc}
\toprule
\textbf{Loss Type} & \textbf{PIQA} & \textbf{AQuA} \\
\midrule
CE            & 88.1 & 60.7 \\
KL Divergence & 90.4 & 62.8 \\
\textbf{RMSE} & \textbf{92.5} & \textbf{64.2} \\
\bottomrule
\end{tabular}
}
\end{minipage}
\vspace{-5pt}
\end{table}

\textbf{Pooling Methods of Copilot's Attention Input.} In Section~\ref{sec:method}, we refine the Copilot model’s attention mechanism by incorporating the mean-pooled hidden representations from the Pilot model as inputs (Eq.~\ref{eqa:attention}). To assess the effect of this design choice, we also experiment with alternative pooling strategies, including max pooling and sum pooling. The comparative results are presented in Table~\ref{tab:ablation_pooling_methods}. From the results, the mean pooling empirically provides the best balance of stability and efficiency.

\textbf{Loss Types of T-Copilot.} We adopt the RMSE loss as the training objective for T-Copilot. To validate this choice, we conduct ablations with alternative loss formulations, including Cross-Entropy and KL Divergence. The results, summarized in Table~\ref{tab:ablation_loss_type}, show that RMSE consistently yields superior performance across benchmarks.

\newpage

\section{Additional Related Works}
\label{appendix:related_work}
\textbf{Transformers for Language Modeling} The Transformer is a sequence-to-sequence model architecture that employs attention-based mechanisms, making it highly effective for autoregressive language modeling \cite{al2019character, brown2020language, rae2021scaling}. The vanilla Transformer \cite{vaswani2017attention} follows an \textbf{encoder-decoder} structure, comprising a stack of identical layers in both the encoder and decoder components. Each layer consists of a multi-head self-attention mechanism \cite{cheng2016long}, layer normalization \cite{ba2016layer}, a position-wise feedforward network, and residual connection \cite{he2016deep}. The encoder-decoder structure serves as the foundation for many early-stage influential LLMs, such as T5 \cite{raffel2020exploring}, and BART \cite{lewis2019bart}. These models have demonstrated strong capabilities on certain generation tasks \cite{kang2018self, robinson2023larp}. 
On the other hand, (causal) \textbf{decoder-only} Transformer models \cite{radford2018improving, radford2019language}, trained with the autoregressive language modeling objective \cite{sanh2021multitask, wang2022language}, have demonstrated exceptional performance in open-ended generation and reasoning tasks \cite{brown2020language, wei2022chain, wei2022emergent}. The superior generalization capabilities have established decoder-only Transformers as the backbone of recent state-of-the-art LLMs such as PaLM \cite{chowdhery2023palm}, Falcon \cite{almazrouei2023falcon}, LLaMA \cite{touvron2023llama, dubey2024llama}, and ChatGPT \cite{achiam2023gpt, liu2023summary, hurst2024gpt}. 
In this work, we develop the \modelname framework to support both encoder-decoder and decoder-only Transformer architectures. Our intuition is to provide flexibility across a broad range of model configurations and downstream task scenarios.

\textbf{LLMs Adaptation with Fine-tuning.} Large language models perform well across many NLP tasks \citep{zou2025tattoo,yu2025demystifyingreinforcementlearningagentic}, yet adapting them to specialized tasks remains challenging \cite{wei2021finetuned}. The standard solution, full-parameter fine-tuning \cite{lv2023full}, retrains all model parameters on task-specific data.
Applying full fine-tuning has proven effective at improving performance, but can sometimes be computationally costly \cite{shuttleworth2024lora}. Recent work on parameter-efficient fine-tuning approaches \cite{karimi2021compacter, hu2021lora, xu2023parameter, zou2024promptintern, han2024parameter}, such as prefix-tuning \cite{li2021prefix} and LoRA \cite{hu2021lora}, aims to reduce the computational overhead by tuning only a small subset of model parameters while still leveraging the expressive power of pre-trained models. 
Our learning framework builds upon the aforementioned methods' fine-tuning paradigm and aims to refine the fine-tuning and inference by utilizing mistake information during the models' learning trajectory. Additionally, since the Copilot model retains the decoder module structure, our framework can seamlessly integrate with various adaptation techniques such as DoRA \cite{liu2024dora} and ReFT \cite{wu2024reft}.

\textbf{Differences from Boosting and neural exploration}. 
The core idea of Boosting \citep{bentejac2021comparative, freund1999short} is to train a series of models, where each subsequent model focuses on correcting the errors made by the previous ones. However, the proposed Copilot framework is distinct from boosting in several key ways. First, in boosting, the subsequent model is trained to correct the errors of a fixed, post-trained weak model, whereas the Copilot learns from the mistakes (errors) made by a strong, pre-trained pilot model during its fine-tuning trajectory. Second, the labels (errors) that the subsequent model in boosting attempts to predict are derived from the fixed parameters of the preceding weak model, whereas the labels that the Copilot learns are based on the fine-tuning dynamics of the Pilot’s parameters. Third, while all models in boosting only take data features as inputs, the Copilot also takes the internal state of the Pilot model as part of its input. Fourth, boosting does not require modifications to the base models, whereas the Copilot framework involves modifying the model structure, specifically the Transformer architecture.  
Another related work is neural exploration methods \cite{jacot2018neural, xu2020neural, qi2023graph, ban2024neural, NEURIPS2024_25ead0ef,wen2025defendingindirectpromptinjection}. For example, one recent work called EE-Net \citep{ban2021ee} introduces an exploration neural network to manage trade-offs between exploitation and exploration in the contextual bandits setting. In contrast, the Copilot focuses on learning from the mistakes of the Pilot model in an offline, supervised learning regime, specifically tailored for Transformer-based sequence-to-sequence generation tasks.

\clearpage

\section{Broader impact and Limitation} \label{appendix:limitation}
\paragraph{Broader Impact.} This paper introduces Transformer Copilot, a novel framework that enhances LLM fine-tuning by introducing a Mistake Log and an auxiliary Copilot model that learns to rectify errors during inference. Our approach improves model reliability and efficiency with minimal overhead and promotes more transparent and interpretable behavior by grounding predictions in prior training dynamics. While our method has broad applicability across domains, we do not foresee any specific societal risks or negative impacts that require special consideration.

\paragraph{Limitation.} While \modelname demonstrates robust improvements in inference quality by leveraging model-internal training signals, one potential consideration for future work lies in the coverage and diversity of the Mistake Log itself. Since the Mistake Log is constructed from the forward pass during supervised fine-tuning, its quality is inherently dependent on the richness and representativeness of the fine-tuning data distribution. In scenarios with limited domain coverage or skewed data sources, the Mistake Log may capture a narrower set of error patterns, potentially limiting the Copilot’s generalizability.
On the other hand, in our primary experimental setup, we fine-tune on task-diverse datasets with ample coverage, which ensures that the Mistake Log remains informative and representative. Our transferability experiments on the Copilot model further validate the Mistake Log's utility across unseen Pilot models,  suggesting robustness to architectural and distributional shifts. Still, exploring data augmentation strategies or adaptive logging policies to enrich Mistake Logs for low-resource or domain-shifted settings remains an interesting future direction.

Overall, \modelname offers a promising paradigm shift toward internal signal utilization during LLM fine-tuning. We are optimistic that future research will build upon these contributions to develop even more precise and generalizable models through the continued adoption of reflective learning mechanisms.

\newpage
\section*{NeurIPS Paper Checklist}

\begin{enumerate}

\item {\bf Claims}
    \item[] Question: Do the main claims made in the abstract and introduction accurately reflect the paper's contributions and scope?
    \item[] Answer: \answerYes{} %
    \item[] Justification: Our claims are summarized properly in our Introduction section.
    \item[] Guidelines:
    \begin{itemize}
        \item The answer NA means that the abstract and introduction do not include the claims made in the paper.
        \item The abstract and/or introduction should clearly state the claims made, including the contributions made in the paper and important assumptions and limitations. A No or NA answer to this question will not be perceived well by the reviewers. 
        \item The claims made should match theoretical and experimental results, and reflect how much the results can be expected to generalize to other settings. 
        \item It is fine to include aspirational goals as motivation as long as it is clear that these goals are not attained by the paper. 
    \end{itemize}

\item {\bf Limitations}
    \item[] Question: Does the paper discuss the limitations of the work performed by the authors?
    \item[] Answer: \answerYes{} %
    \item[] Justification: Limitation discussion is included in the Appendix.
    \item[] Guidelines:
    \begin{itemize}
        \item The answer NA means that the paper has no limitation while the answer No means that the paper has limitations, but those are not discussed in the paper. 
        \item The authors are encouraged to create a separate "Limitations" section in their paper.
        \item The paper should point out any strong assumptions and how robust the results are to violations of these assumptions (e.g., independence assumptions, noiseless settings, model well-specification, asymptotic approximations only holding locally). The authors should reflect on how these assumptions might be violated in practice and what the implications would be.
        \item The authors should reflect on the scope of the claims made, e.g., if the approach was only tested on a few datasets or with a few runs. In general, empirical results often depend on implicit assumptions, which should be articulated.
        \item The authors should reflect on the factors that influence the performance of the approach. For example, a facial recognition algorithm may perform poorly when image resolution is low or images are taken in low lighting. Or a speech-to-text system might not be used reliably to provide closed captions for online lectures because it fails to handle technical jargon.
        \item The authors should discuss the computational efficiency of the proposed algorithms and how they scale with dataset size.
        \item If applicable, the authors should discuss possible limitations of their approach to address problems of privacy and fairness.
        \item While the authors might fear that complete honesty about limitations might be used by reviewers as grounds for rejection, a worse outcome might be that reviewers discover limitations that aren't acknowledged in the paper. The authors should use their best judgment and recognize that individual actions in favor of transparency play an important role in developing norms that preserve the integrity of the community. Reviewers will be specifically instructed to not penalize honesty concerning limitations.
    \end{itemize}

\item {\bf Theory assumptions and proofs}
    \item[] Question: For each theoretical result, does the paper provide the full set of assumptions and a complete (and correct) proof?
    \item[] Answer: \answerYes{} %
    \item[] Justification: We provide detailed assumptions and complete proof in the Appendix.
    \item[] Guidelines:
    \begin{itemize}
        \item The answer NA means that the paper does not include theoretical results. 
        \item All the theorems, formulas, and proofs in the paper should be numbered and cross-referenced.
        \item All assumptions should be clearly stated or referenced in the statement of any theorems.
        \item The proofs can either appear in the main paper or the supplemental material, but if they appear in the supplemental material, the authors are encouraged to provide a short proof sketch to provide intuition. 
        \item Inversely, any informal proof provided in the core of the paper should be complemented by formal proofs provided in appendix or supplemental material.
        \item Theorems and Lemmas that the proof relies upon should be properly referenced. 
    \end{itemize}

    \item {\bf Experimental result reproducibility}
    \item[] Question: Does the paper fully disclose all the information needed to reproduce the main experimental results of the paper to the extent that it affects the main claims and/or conclusions of the paper (regardless of whether the code and data are provided or not)?
    \item[] Answer: \answerYes{} %
    \item[] Justification: The paper provides detailed descriptions of the method implementation, training details, and experimental setup in the Experiment Section and Appendix.
    \item[] Guidelines:
    \begin{itemize}
        \item The answer NA means that the paper does not include experiments.
        \item If the paper includes experiments, a No answer to this question will not be perceived well by the reviewers: Making the paper reproducible is important, regardless of whether the code and data are provided or not.
        \item If the contribution is a dataset and/or model, the authors should describe the steps taken to make their results reproducible or verifiable. 
        \item Depending on the contribution, reproducibility can be accomplished in various ways. For example, if the contribution is a novel architecture, describing the architecture fully might suffice, or if the contribution is a specific model and empirical evaluation, it may be necessary to either make it possible for others to replicate the model with the same dataset, or provide access to the model. In general. releasing code and data is often one good way to accomplish this, but reproducibility can also be provided via detailed instructions for how to replicate the results, access to a hosted model (e.g., in the case of a large language model), releasing of a model checkpoint, or other means that are appropriate to the research performed.
        \item While NeurIPS does not require releasing code, the conference does require all submissions to provide some reasonable avenue for reproducibility, which may depend on the nature of the contribution. For example
        \begin{enumerate}
            \item If the contribution is primarily a new algorithm, the paper should make it clear how to reproduce that algorithm.
            \item If the contribution is primarily a new model architecture, the paper should describe the architecture clearly and fully.
            \item If the contribution is a new model (e.g., a large language model), then there should either be a way to access this model for reproducing the results or a way to reproduce the model (e.g., with an open-source dataset or instructions for how to construct the dataset).
            \item We recognize that reproducibility may be tricky in some cases, in which case authors are welcome to describe the particular way they provide for reproducibility. In the case of closed-source models, it may be that access to the model is limited in some way (e.g., to registered users), but it should be possible for other researchers to have some path to reproducing or verifying the results.
        \end{enumerate}
    \end{itemize}

\item {\bf Open access to data and code}
    \item[] Question: Does the paper provide open access to the data and code, with sufficient instructions to faithfully reproduce the main experimental results, as described in supplemental material?
    \item[] Answer: \answerYes{} %
    \item[] Justification: Our code implementation is submitted along with the manuscript.
    \item[] Guidelines:
    \begin{itemize}
        \item The answer NA means that paper does not include experiments requiring code.
        \item Please see the NeurIPS code and data submission guidelines (\url{https://nips.cc/public/guides/CodeSubmissionPolicy}) for more details.
        \item While we encourage the release of code and data, we understand that this might not be possible, so “No” is an acceptable answer. Papers cannot be rejected simply for not including code, unless this is central to the contribution (e.g., for a new open-source benchmark).
        \item The instructions should contain the exact command and environment needed to run to reproduce the results. See the NeurIPS code and data submission guidelines (\url{https://nips.cc/public/guides/CodeSubmissionPolicy}) for more details.
        \item The authors should provide instructions on data access and preparation, including how to access the raw data, preprocessed data, intermediate data, and generated data, etc.
        \item The authors should provide scripts to reproduce all experimental results for the new proposed method and baselines. If only a subset of experiments are reproducible, they should state which ones are omitted from the script and why.
        \item At submission time, to preserve anonymity, the authors should release anonymized versions (if applicable).
        \item Providing as much information as possible in supplemental material (appended to the paper) is recommended, but including URLs to data and code is permitted.
    \end{itemize}

\item {\bf Experimental setting/details}
    \item[] Question: Does the paper specify all the training and test details (e.g., data splits, hyperparameters, how they were chosen, type of optimizer, etc.) necessary to understand the results?
    \item[] Answer: \answerYes{} %
    \item[] Justification: Experiment details are provided in detail in the Appendix.
    \item[] Guidelines:
    \begin{itemize}
        \item The answer NA means that the paper does not include experiments.
        \item The experimental setting should be presented in the core of the paper to a level of detail that is necessary to appreciate the results and make sense of them.
        \item The full details can be provided either with the code, in appendix, or as supplemental material.
    \end{itemize}

\item {\bf Experiment statistical significance}
    \item[] Question: Does the paper report error bars suitably and correctly defined or other appropriate information about the statistical significance of the experiments?
    \item[] Answer: \answerYes{} %
    \item[] Justification: Each experiment's results are reported as the average over three independent experimental runs.
    \item[] Guidelines:
    \begin{itemize}
        \item The answer NA means that the paper does not include experiments.
        \item The authors should answer "Yes" if the results are accompanied by error bars, confidence intervals, or statistical significance tests, at least for the experiments that support the main claims of the paper.
        \item The factors of variability that the error bars are capturing should be clearly stated (for example, train/test split, initialization, random drawing of some parameter, or overall run with given experimental conditions).
        \item The method for calculating the error bars should be explained (closed form formula, call to a library function, bootstrap, etc.)
        \item The assumptions made should be given (e.g., Normally distributed errors).
        \item It should be clear whether the error bar is the standard deviation or the standard error of the mean.
        \item It is OK to report 1-sigma error bars, but one should state it. The authors should preferably report a 2-sigma error bar than state that they have a 96\% CI, if the hypothesis of Normality of errors is not verified.
        \item For asymmetric distributions, the authors should be careful not to show in tables or figures symmetric error bars that would yield results that are out of range (e.g. negative error rates).
        \item If error bars are reported in tables or plots, The authors should explain in the text how they were calculated and reference the corresponding figures or tables in the text.
    \end{itemize}

\item {\bf Experiments compute resources}
    \item[] Question: For each experiment, does the paper provide sufficient information on the computer resources (type of compute workers, memory, time of execution) needed to reproduce the experiments?
    \item[] Answer: \answerYes{} %
    \item[] Justification: The experimental details are provided in the Appendix.
    \item[] Guidelines:
    \begin{itemize}
        \item The answer NA means that the paper does not include experiments.
        \item The paper should indicate the type of compute workers CPU or GPU, internal cluster, or cloud provider, including relevant memory and storage.
        \item The paper should provide the amount of compute required for each of the individual experimental runs as well as estimate the total compute. 
        \item The paper should disclose whether the full research project required more compute than the experiments reported in the paper (e.g., preliminary or failed experiments that didn't make it into the paper). 
    \end{itemize}
    
\item {\bf Code of ethics}
    \item[] Question: Does the research conducted in the paper conform, in every respect, with the NeurIPS Code of Ethics \url{https://neurips.cc/public/EthicsGuidelines}?
    \item[] Answer: \answerYes{} %
    \item[] Justification: We comply with the Code of Ethics.
    \item[] Guidelines:
    \begin{itemize}
        \item The answer NA means that the authors have not reviewed the NeurIPS Code of Ethics.
        \item If the authors answer No, they should explain the special circumstances that require a deviation from the Code of Ethics.
        \item The authors should make sure to preserve anonymity (e.g., if there is a special consideration due to laws or regulations in their jurisdiction).
    \end{itemize}

\item {\bf Broader impacts}
    \item[] Question: Does the paper discuss both potential positive societal impacts and negative societal impacts of the work performed?
    \item[] Answer: \answerYes{} %
    \item[] Justification: We include the broader impact discussion in the Appendix.
    \item[] Guidelines:
    \begin{itemize}
        \item The answer NA means that there is no societal impact of the work performed.
        \item If the authors answer NA or No, they should explain why their work has no societal impact or why the paper does not address societal impact.
        \item Examples of negative societal impacts include potential malicious or unintended uses (e.g., disinformation, generating fake profiles, surveillance), fairness considerations (e.g., deployment of technologies that could make decisions that unfairly impact specific groups), privacy considerations, and security considerations.
        \item The conference expects that many papers will be foundational research and not tied to particular applications, let alone deployments. However, if there is a direct path to any negative applications, the authors should point it out. For example, it is legitimate to point out that an improvement in the quality of generative models could be used to generate deepfakes for disinformation. On the other hand, it is not needed to point out that a generic algorithm for optimizing neural networks could enable people to train models that generate Deepfakes faster.
        \item The authors should consider possible harms that could arise when the technology is being used as intended and functioning correctly, harms that could arise when the technology is being used as intended but gives incorrect results, and harms following from (intentional or unintentional) misuse of the technology.
        \item If there are negative societal impacts, the authors could also discuss possible mitigation strategies (e.g., gated release of models, providing defenses in addition to attacks, mechanisms for monitoring misuse, mechanisms to monitor how a system learns from feedback over time, improving the efficiency and accessibility of ML).
    \end{itemize}
    
\item {\bf Safeguards}
    \item[] Question: Does the paper describe safeguards that have been put in place for responsible release of data or models that have a high risk for misuse (e.g., pretrained language models, image generators, or scraped datasets)?
    \item[] Answer: \answerNA{} %
    \item[] Justification: We do not observe such risks of misuse in this paper.
    \item[] Guidelines:
    \begin{itemize}
        \item The answer NA means that the paper poses no such risks.
        \item Released models that have a high risk for misuse or dual-use should be released with necessary safeguards to allow for controlled use of the model, for example by requiring that users adhere to usage guidelines or restrictions to access the model or implementing safety filters. 
        \item Datasets that have been scraped from the Internet could pose safety risks. The authors should describe how they avoided releasing unsafe images.
        \item We recognize that providing effective safeguards is challenging, and many papers do not require this, but we encourage authors to take this into account and make a best faith effort.
    \end{itemize}

\item {\bf Licenses for existing assets}
    \item[] Question: Are the creators or original owners of assets (e.g., code, data, models), used in the paper, properly credited and are the license and terms of use explicitly mentioned and properly respected?
    \item[] Answer: \answerYes{} %
    \item[] Justification: All existing resources are properly cited in this paper.
    \item[] Guidelines:
    \begin{itemize}
        \item The answer NA means that the paper does not use existing assets.
        \item The authors should cite the original paper that produced the code package or dataset.
        \item The authors should state which version of the asset is used and, if possible, include a URL.
        \item The name of the license (e.g., CC-BY 4.0) should be included for each asset.
        \item For scraped data from a particular source (e.g., website), the copyright and terms of service of that source should be provided.
        \item If assets are released, the license, copyright information, and terms of use in the package should be provided. For popular datasets, \url{paperswithcode.com/datasets} has curated licenses for some datasets. Their licensing guide can help determine the license of a dataset.
        \item For existing datasets that are re-packaged, both the original license and the license of the derived asset (if it has changed) should be provided.
        \item If this information is not available online, the authors are encouraged to reach out to the asset's creators.
    \end{itemize}

\item {\bf New assets}
    \item[] Question: Are new assets introduced in the paper well documented and is the documentation provided alongside the assets?
    \item[] Answer: \answerNA{} %
    \item[] Justification: This paper does not release any real-world dataset. Other code implementations are detailed as supplementary material.
    \item[] Guidelines:
    \begin{itemize}
        \item The answer NA means that the paper does not release new assets.
        \item Researchers should communicate the details of the dataset/code/model as part of their submissions via structured templates. This includes details about training, license, limitations, etc. 
        \item The paper should discuss whether and how consent was obtained from people whose asset is used.
        \item At submission time, remember to anonymize your assets (if applicable). You can either create an anonymized URL or include an anonymized zip file.
    \end{itemize}

\item {\bf Crowdsourcing and research with human subjects}
    \item[] Question: For crowdsourcing experiments and research with human subjects, does the paper include the full text of instructions given to participants and screenshots, if applicable, as well as details about compensation (if any)? 
    \item[] Answer: \answerNA{} %
    \item[] Justification: No human objects are involved in this paper.
    \item[] Guidelines:
    \begin{itemize}
        \item The answer NA means that the paper does not involve crowdsourcing nor research with human subjects.
        \item Including this information in the supplemental material is fine, but if the main contribution of the paper involves human subjects, then as much detail as possible should be included in the main paper. 
        \item According to the NeurIPS Code of Ethics, workers involved in data collection, curation, or other labor should be paid at least the minimum wage in the country of the data collector. 
    \end{itemize}

\item {\bf Institutional review board (IRB) approvals or equivalent for research with human subjects}
    \item[] Question: Does the paper describe potential risks incurred by study participants, whether such risks were disclosed to the subjects, and whether Institutional Review Board (IRB) approvals (or an equivalent approval/review based on the requirements of your country or institution) were obtained?
    \item[] Answer: \answerNA{} %
    \item[] Justification: This paper does not involve crowdsourcing nor research with human subjects.
    \item[] Guidelines:
    \begin{itemize}
        \item The answer NA means that the paper does not involve crowdsourcing nor research with human subjects.
        \item Depending on the country in which research is conducted, IRB approval (or equivalent) may be required for any human subjects research. If you obtained IRB approval, you should clearly state this in the paper. 
        \item We recognize that the procedures for this may vary significantly between institutions and locations, and we expect authors to adhere to the NeurIPS Code of Ethics and the guidelines for their institution. 
        \item For initial submissions, do not include any information that would break anonymity (if applicable), such as the institution conducting the review.
    \end{itemize}

\item {\bf Declaration of LLM usage}
    \item[] Question: Does the paper describe the usage of LLMs if it is an important, original, or non-standard component of the core methods in this research? Note that if the LLM is used only for writing, editing, or formatting purposes and does not impact the core methodology, scientific rigorousness, or originality of the research, declaration is not required.
    \item[] Answer: \answerYes{} %
    \item[] Justification: The LLM usage is described in detail in this paper.
    \item[] Guidelines:
    \begin{itemize}
        \item The answer NA means that the core method development in this research does not involve LLMs as any important, original, or non-standard components.
        \item Please refer to our LLM policy (\url{https://neurips.cc/Conferences/2025/LLM}) for what should or should not be described.
    \end{itemize}

\end{enumerate}

\end{document}